\newtheorem{theom}{Theorem}
\newtheorem{assumption}{Assumption}
\newtheorem{lemma}{Lemma}
\definecolor{comment}{rgb}{1.0,0,0}
\definecolor{change}{rgb}{0,0,1}
\definecolor{normal}{rgb}{0, 0, 0}
\newcites{App}{Appendix References}
\title{Attentive Gaussian processes for probabilistic time-series generation}
\author{%
  Kuilin Chen \\
  University of Toronto\\
  Toronto, Ontario, Canada \\
  \texttt{kuilin.chen@mail.utoronto.ca} \\
  \And
  Chi-Guhn Lee \\
  University of Toronto\\
  Toronto, Ontario, Canada \\
  \texttt{cglee@mie.utoronto.ca} \\
}
\begin{document}

\maketitle

\begin{abstract}
  The transduction of sequence has been mostly done by recurrent networks, which are computationally demanding and often underestimate uncertainty severely. We propose a computationally efficient attention-based network combined with the Gaussian process regression to generate real-valued sequence, which we call the Attentive-GP. The proposed model not only improves the training efficiency by dispensing recurrence and convolutions but also learns the factorized generative distribution with Bayesian representation. However, the presence of the GP precludes the commonly used mini-batch approach to the training of the attention network. Therefore, we develop a block-wise training algorithm to allow mini-batch training of the network while the GP is trained using full-batch, resulting in a scalable training method. The algorithm has been proved to converge and shows comparable, if not better, quality of the found solution. As the algorithm does not assume any specific network architecture, it can be used with a wide range of hybrid models such as neural networks with kernel machine layers in the scarcity of resources for computation and memory.
\end{abstract}

\section{Introduction} \label{sec:introduction}
The deep generative models are originally designed for unconditional generation tasks using deterministic neural networks and stochastic latent variables. Given training samples $\mathbf{y}$ in a potentially high-dimensional space, a generative model learns to represent an estimation of the true generative distribution $p_{data}(\mathbf{y})$ and generates new samples from the learnt representation. Direct modelling $p_{data}$ is intractable due to high-dimensionality and complex correlation across dimensions. Instead of learning the complex distribution directly, latent variable models have been developed to learn a deterministic mapping from simple stochastic latent variables $\mathbf{z}$ to observed variables $\mathbf{y}$. Both generative adversarial networks (GANs) \cite{goodfellow:2014} and variational autoencoders (VAEs) \cite{Kingma:Welling:2013} fall into the category of latent variable models, and can be extended to conditional generation tasks \cite{mirza:2014,sohn:2015}, where $\mathbf{y}$ is dependent on external input $\mathbf{x}$. A fundamental issue in those latent variable-based conditional generative models is that the latent variables could be ignored \cite{sonderby:etal:2016}. As a universal approximator, the generative network is capable of learning $\mathbf{y}$ directly from $\mathbf{x}$ and treats $\mathbf{z}$ as noise. When such bypassing happens for latent variables, the generative models degenerate to deterministic models. Although the bypassing phenomenon can be alleviated by introducing weakened generative networks \cite{bowman:etal:2016,chen:etal:2017lossy,serban:etal:2017}, it requires manual and problem-specific design in generative network architectures.

The auto-regressive generative approach models the output sequence $\{{y}_1,...,{y}_{L}\}$ conditional on the input sequence $\{\mathbf{x}_1,...,\mathbf{x}_L\}$ and avoids the bypassing phenomenon \cite{van:eta;:2016pixelcnn}. The generation task is performed by iteratively computing a new output $\hat{y}_t$ for $t=1,...,L$ given previously computed outputs $\{\hat{y}_1,...,\hat{y}_{t-1}\}$ and the entire input sequence $\{\mathbf{x}_1,...,\mathbf{x}_L\}$ assuming the generative distribution $p_{data}(\mathbf{y})$ can be factorized as $p(y_1)p(y_2|y_1)...p(y_L|y_1,...,y_{L-1})$ across $L$ time steps \cite{van:etal:2016pixelrnn}.

An issue in conditional generative models parameterized by neural networks is that the variance for out-of-sample inputs can be arbitrary (see Fig. \ref{fig:sin_example}(a)), where neural networks assign unreasonably small uncertainty over incorrectly predicted mean values for out-of-sample data points. Therefore, it is undesirable to model the generative distribution for real-valued time-series via standard neural networks.

\begin{wrapfigure}{r}{0.5\textwidth}
    \centering
      \centering
      \includegraphics[width=1.0\linewidth]{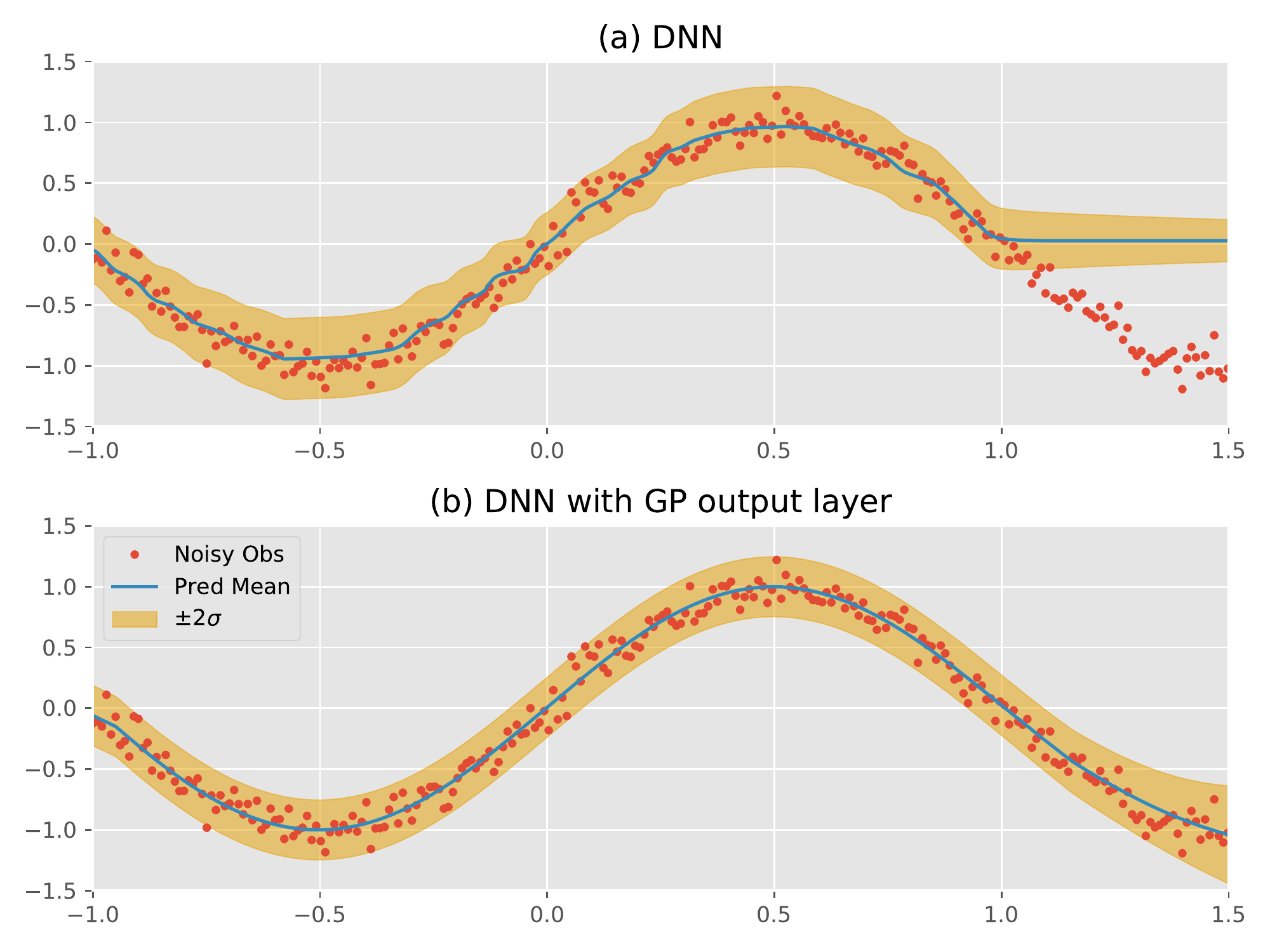}
      \caption{Predictive distributions for $sin(x)$ by (a) deep neural networks (b) deep neural networks with GP output layer. Both models are trained by data between $[-1, 1]$. The data between $(1, 1.5]$ is unseen during training.}
      \label{fig:sin_example}
\end{wrapfigure}    
    
Nonparametric Bayesian approaches, such as Gaussian processes (GPs), provide richer representation of uncertainty through kernel functions \cite{Rasmussen:Williams:2006}. Data outside the boundary of the training set can be well extrapolated by GPs with appropriate kernels \cite{wilson:ryan:2013}. Given the intuitive benefit of combining GPs and neural networks, such hybrid models have been considered in different contexts. For example, GP regression can be integrated into feedforward and recurrent neural networks as the last layer for probabilistic regression tasks \cite{calandra:etal:2016,wilson:etal:2016}. Although hybrid models outperform standalone neural networks for i.i.d. prediction tasks, the potential application in time-series generation has not been investigated. We find that deep neural networks (DNN) with GP output layers lead to robust estimation of data distribution for both in-sample and out-of-sample inputs (see Fig. \ref{fig:sin_example} (b)). Such property is desirable when generating time-series given out-of-sample inputs.

In this paper, we tackle the existing challenges in conditional time-series generation by proposing Attentive-GP as a combination of the Transformer architecture \cite{vaswani:etal:2017} and the GP regression. The input time-series is encoded into a feature space via a linear transformation layer and then combined with positional encoding. The attention mechanism searches the most relevant information across all time steps between the features of input and output sequences. The output layer is replaced by a GP regression layer to map the featured attentions to output sequence with a probabilistic Bayesian representation. The combination of a Transformer network and the GP regression poses a computational challenge. The training of the GP layer requires deterministic gradient using full-batch data, whereas the Transformer can be better trained using mini-batch data. Therefore, we propose a block-wise training algorithm with a convergence proof. The algorithm treats the Transformer and the GP layer as separate blocks and trains them alternatively for more efficient training. This algorithm can be applied to any hybrid models of neural networks with kernel machines, including GPs and support vector machines (SVMs). 

The contributions of this paper are 1) We propose a tractable Bayesian generative models for real-valued sequences with explicit distributions,and 2) The proposed block-wise training algorithm makes the Attentive-GP more scalable to large data sets.  

\section{Attentive-GP} \label{sec:methodology}
The new method is based on the combination of Transformer architecture and GP regression. The last layer in the Transformer architecture is replaced by a GP regression layer $f(\cdot)$ to retain a probabilistic Bayesian representation. Meanwhile, the penultimate layer in the Transformer serves as a feature extractor $\phi(\cdot)$. The input sequence $\{\mathbf{x}_1,...,\mathbf{x}_L\}$ and the past output sequence $\{y_1,...,y_{i-1}\}$ are propagated through the feature extractor $\phi(\cdot)$ to get a feature vector $\bar{\mathbf{x}}_i$ that contains the most relevant information to predict next output $y_i$.
\begin{equation}
    \bar{\mathbf{x}}_{i} =\phi\left(\{\mathbf{x}_1,...,\mathbf{x}_L\}, \{y_1,...,y_{i-1}\}\right)
\end{equation}
Then $\bar{\mathbf{x}}_{i}$ is mapped to $y_i$ through the GP regression layer $f(\cdot)$ as follows
\begin{equation} \label{eq:regression}
    y_{i}  = f(\bar{\mathbf{x}}_i) + \epsilon_i
\end{equation}
where $\varepsilon_i \sim \mathcal{N}\left(0, \sigma^{2}\right)$ is Gaussian noise with zero mean and variance $\sigma^{2}$.

The structure of the proposed architecture is shown in Fig. \ref{fig:transformer_gp}. Details about $\phi(\cdot)$ and $f(\cdot)$ are introduced in the following subsections.

\begin{wrapfigure}{l}{0.6\textwidth}    
      \centering
      \includegraphics[width=1.0\linewidth]{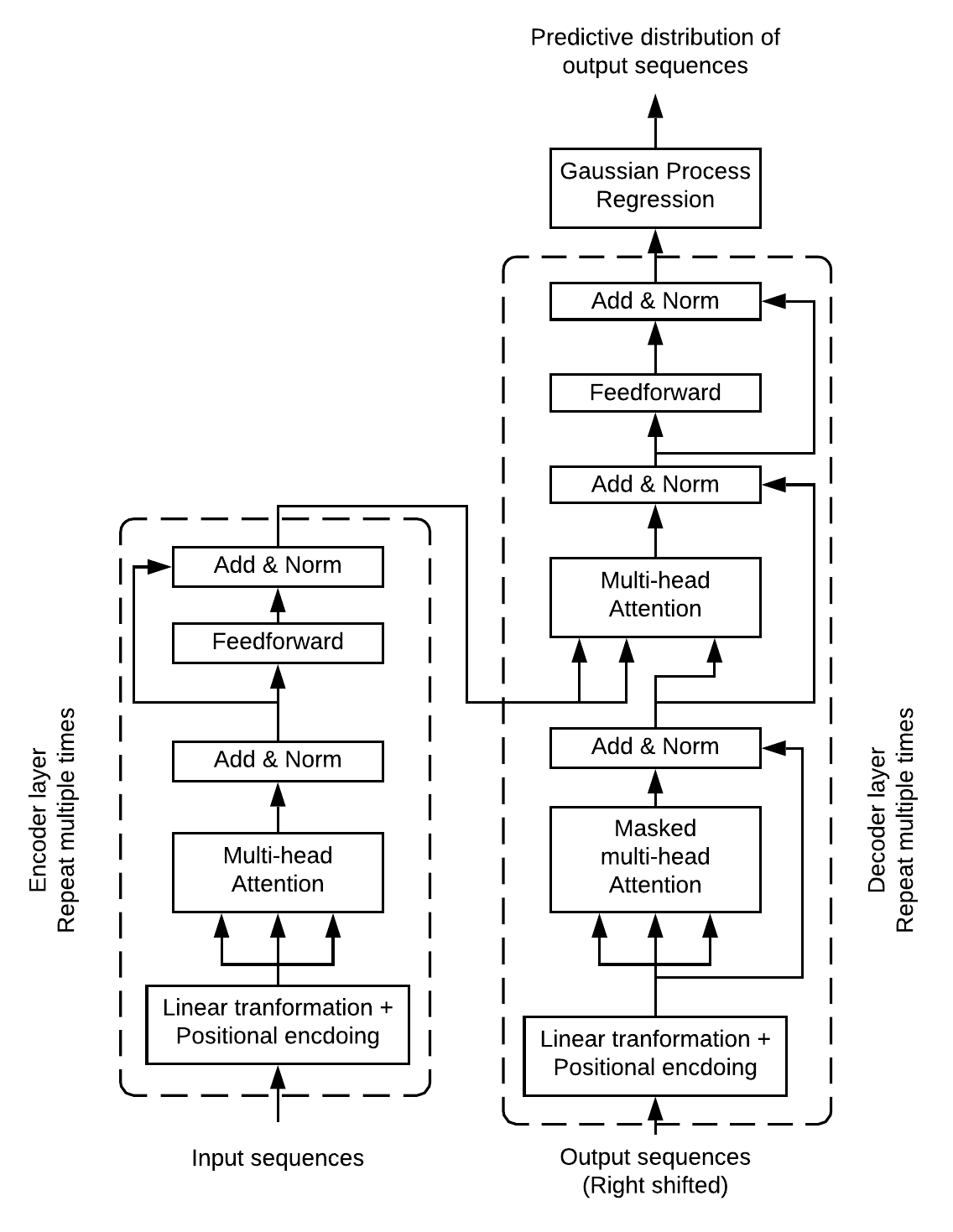}
      \caption{Model architecture of Attentive-GP}
      \label{fig:transformer_gp}
\end{wrapfigure}

\subsection{Encoder-decoder with attention}
The original input sequence $\{\mathbf{x}_1,...,\mathbf{x}_L\}$ is converted to a $d_x$-dimensional embedded sequence $\{\mathbf{x}_1^{'},...,\mathbf{x}_L^{'}\}$ via linear transformation and sinusoidal positional encoding. To accelerate the training process, RNNs are not used to encode the input and output sequences here. Therefore, positional encoding, such as a $\operatorname{sin}$ or $\operatorname{cos}$ function \cite{gehring:etal:2017,sukhbaatar:etal:2015}, has to be added to the input and output sequences to expose the position information to the model.

Given $\{\mathbf{x}_1^{'},...,\mathbf{x}_L^{'}\}$, where $\mathbf{x}_i^{'} \in \mathbb{R}^{d_x}$, an attention layer in the encoder computes a sequence $\{\mathbf{h}_1,...,\mathbf{h}_L\}$, where $\mathbf{h}_i \in \mathbb{R}^{d_h}$. An attention mechanism maps a query and a set of key-value pairs to a weighted sum of the values \cite{shaw:etal:2018}. The query, key and value can be vectors computed from linearly transformed elements in input or output sequences. Let $\mathbf{W}^{Qx}, \mathbf{W}^{Kx}, \mathbf{W}^{Vx} \in \mathbb{R}^{d_{x} \times d_{h}}$ be parameters in the attention layer within the encoder. First, the compatibility $e_{ij}$ between two input elements $\mathbf{x}_i^{'}$ and $\mathbf{x}_j^{'}$ is computed by scaled dot product \cite{vaswani:etal:2017}
\begin{equation} \label{eq:mha_compatibility}
    e_{i j}=\frac{\left(\mathbf{x}_{i}^{'} \mathbf{W}^{Qx}\right)\left(\mathbf{x}_{j}^{'} \mathbf{W}^{Kx}\right)^{T}}{\sqrt{d_{h}}}
\end{equation}
Then the attention weight $\alpha_{ij}$ is computed using softmax function
\begin{equation} \label{eq:attention_weight}
    \alpha_{i j}=\frac{\exp (e_{i j})}{\sum_{k=1}^{L} \exp (e_{i k})}
\end{equation}
Consequently, $\mathbf{h}_i$ is calculated as weighted sum of value vectors across all elements in the sequence
\begin{equation} \label{eq:attention}
    \mathbf{h}_{i}=\sum_{j=1}^{L} \alpha_{i j}\left(\mathbf{x}_{j} \mathbf{W}^{Vx}\right)
\end{equation}
In addition, $\mathbf{h}_i$ is passed through a feedforward layer to get encoder output $\mathbf{c}_i$,  where $\mathbf{c}_i \in \mathbb{R}^{d_c}$.

Two attention layers are used in the decoder. Similar to Eqs. \ref{eq:mha_compatibility}, \ref{eq:attention_weight} and \ref{eq:attention}, the first attention layer of the decoder maps the output sequence $\{\mathbf{y}_1^{'},...,\mathbf{y}_L^{'}\}$, which is converted from the original sequence $\{y_1,...,y_L\}$ plus sinusoidal position encoding, to $\{\mathbf{s}_1,...,\mathbf{s}_N\}$, where $\mathbf{s}_i \in \mathbb{R}^{d_s}$. The second attention layer of the decoder takes $\{\mathbf{c}_1,...,\mathbf{c}_N\}$ and $\{\mathbf{s}_1,...,\mathbf{s}_N\}$ as inputs and finds correlated information $\mathbf{d}_i$ between the two input sequences. Finally, $\mathbf{d}_i$ is passed through a feedforward layer to get the feature vector $\bar{\mathbf{x}}_i \in \mathbb{R}^F$, that is to be used in a Gaussian process regression layer to compute the distribution of $y_i$.

\subsection{Gaussian Process regression layer}
Following Eq. \ref{eq:regression}, observation $y_i$ is obtained by adding Gaussian noise to unobserved latent variable $f(\bar{\mathbf{x}}_i)$. Let $p(Y|\mathbf{f}) = \mathcal{N}(\mathbf{f}, \sigma^{2}I)$ denote the likelihood that relates the noisy observations $Y = [y_1, ..., y_N]$ to latent predicted values $\mathbf{f} = f(\bar{\mathbf{X}}) =\left[f\left(\bar{\mathbf{x}}_{1}\right), \ldots, f\left(\bar{\mathbf{x}}_{N}\right)\right]^{\top}$. Furthermore, the prior distribution of $\mathbf{f}$ is a multivariate Gaussian $p(\mathbf{f}) = \mathcal{N}(\boldsymbol{\mu}, K_{X, X})$ with mean $\boldsymbol{\mu}$ and covariances matrix $K_{X, X}$. Each element in the covariance matrix $(K_{X, X})_{i,j}$ is computed by the kernel function $\kappa(\mathbf{x}_i, \mathbf{x}_j)$ \cite{scholkopf:etal:2002}. The squared exponential kernel is chosen in this study because it can universally approximate any continuous function within an arbitrarily small epsilon band \cite{micchelli:etal:2006}. It has the form
\begin{equation}
    \kappa\left(\bar{\mathbf{x}}_{i}, \bar{\mathbf{x}}_{j}\right)=\exp \left(-\frac{1}{2}\left(\bar{\mathbf{x}}_{i}-\bar{\mathbf{x}}_{j}\right)^{\top} \Theta^{-2}\left(\bar{\mathbf{x}}_{i}-\bar{\mathbf{x}}_{j}\right)\right)
\end{equation}
where hyperparameter $\Theta$ is a lengthscale factor which will be learned along with the parameters of neural networks as part of training. The predictive distribution of $\mathbf{f}_*$ at test inputs $\bar{\mathbf{X}}_{*}$ can be induced from the training data,
\begin{equation}
\begin{aligned} 
& p(\mathbf{f}_{*} | \bar{\mathbf{X}}_{*}, \bar{\mathbf{X}}, \mathbf{y}) = \mathcal{N}\left(\mathbb{E}\left[\mathbf{f}_{*}\right], \operatorname{cov}\left(\mathbf{f}_{*}\right)\right) \\ 
& \mathbb{E}\left[\mathbf{f}_{*}\right] =\boldsymbol{\mu}_*+K_{X_{*}, X}\left[K_{X, X}+\sigma^{2} I\right]^{-1} (\mathbf{y} - \boldsymbol{\mu}) \\ 
& \operatorname{cov}\left(\mathbf{f}_{*}\right) =K_{X_{*}, X_{*}}-K_{X_{*}, X}\left[K_{X, X}+\sigma^{2} I\right]^{-1} K_{X, X_{*}} 
\end{aligned}
\end{equation}
where $\mathbb{E}\left[\mathbf{f}_{*}\right]$ is the mean of predicted outputs, and the diagonal elements in $\operatorname{cov}\left(\mathbf{f}_{*}\right)$ is the variance of predicted outputs \cite{Rasmussen:Williams:2006}. 

\section{Block-wise training}
Due to the probabilistic nature of the GP regression layer, the negative log marginal likelihood function is used as the loss function for the proposed method. Let $\mathbf{W}$ be parameters in the attention layers $\phi(\cdot)$ and $\boldsymbol{\theta} = [\Theta, \sigma^2]$ be parameters in the GP layer $f(\cdot)$. The negative log marginal likelihood has the following form \cite{Rasmussen:Williams:2006}:
\begin{equation} \label{eq:nll}
    \begin{aligned} 
        \mathcal{L} & = - \log p(\mathbf{y} | \bar{\mathbf{X}}, \boldsymbol{\theta}, \mathbf{W}) \\ 
        & = \frac{1}{2}(\mathbf{y}-\boldsymbol{\mu})^{T}\left(K_{X,X}+\sigma^{2} I\right)^{-1}(\mathbf{y}-\boldsymbol{\mu}) +\frac{1}{2} \log \left|K_{X,X}+\sigma^{2} I\right|+\frac{N}{2} \log 2 \pi
    \end{aligned}
\end{equation}
where $K_{X,X}$ implicitly depends on $\Theta$ and $\mathbf{W}$. The proposed model is trained by minimizing the loss function with respect to $\mathbf{W}$ and $\boldsymbol{\theta}$. It is important to note that $K_{X,X}$ and its inverse in the objective function must be calculated using the entire training data set. Therefore, the gradient with respect to $\boldsymbol{\theta}$ needs to be calculated using the entire training data set, precluding the mini-batch based approaches as a training method for the proposed model.

It is technically possible to train the proposed model using the full-batch gradient descent algorithm without factorizing the training data set \cite{wilson:etal:2016}. However, it requires a large amount of memory in the training process and hence not scalable to large training data. A more scalable approach is to use a hybrid approach, where $\mathbf{W}$ is updated by mini-batch data and $\boldsymbol{\theta}$ is updated by full-batch data asynchronously. Notice also that it is not desirable to update $\boldsymbol{\theta}$ in the early stage of training process because the input to GP layer $\bar{\mathbf{x}}_i$ is not stable while $\mathbf{W}$ is not converged. 

\subsection{Algorithm}
We propose a simple and effective algorithm that trains the proposed model in a block-wise manner as described in Algorithm \ref{algorithm}, where the model parameters are divided into two blocks $\mathbf{W}$ for the neural network and $\boldsymbol{\theta}$ for the GP layer. The algorithm updates $\mathbf{W}$ by stochastic gradients using mini-batch data, and $\boldsymbol{\theta}$ by deterministic gradient using full-batch data. Both blocks of parameters are repeatedly updated for a few (stochastic) gradient steps alternately until convergence.

\begin{algorithm}[h] 
  \caption{Block-wise training algorithm}
  \label{alg:example}
\begin{algorithmic} \label{algorithm}
  \STATE Initialize parameters $\boldsymbol{\theta}_0^0, \mathbf{W}_0^0$
  \FOR {$k = 1, 2, 3,...$}
      \FOR {$\tau = 1, 2,...T_1$}
      
      \STATE Compute stochastic gradient $\mathbf{g}(\mathbf{W}_{k-1}^{\tau-1}, \xi_k^{\tau})$ based on mini-batch data $\xi_k^{\tau}$

      \STATE $\displaystyle \mathbf{W}_{k-1}^{\tau} = \mathbf{W}_{k-1}^{\tau-1} - \eta_{k\tau}^{\mathbf{W}}\mathbf{g}(\mathbf{W}_{k-1}^{\tau-1}, \xi_k^{\tau})$

      \ENDFOR

      $\displaystyle \mathbf{W}_k^0 \leftarrow \mathbf{W}_{k-1}^{T}$

      \FOR {$\tau = 1, 2,...T_2$}

      \STATE Compute gradient $\mathbf{g}(\boldsymbol{\theta}_{k-1}^{\tau-1})$ based on full-batch training data

      \STATE $\displaystyle \boldsymbol{\theta}_{k-1}^{\tau} = \boldsymbol{\theta}_{k-1}^{\tau-1} - \eta_{k\tau}^{\boldsymbol{\theta}}\mathbf{g}(\boldsymbol{\theta}_{k-1}^{\tau-1})$

      \ENDFOR
  
      $\displaystyle \boldsymbol{\theta}_k^0 \leftarrow \boldsymbol{\theta}_{k-1}^{T}$
  \ENDFOR
  \STATE Return $\boldsymbol{\theta}, \mathbf{W}$
\end{algorithmic}
\end{algorithm}

The memory requirement in training becomes significant when neural network architecture in $\phi(\cdot)$ contains a large number of parameters, such as the Transformer. The block-wise algorithm could train longer sequences than that in the full-batch settings.  For scalability of the GP layer, the kernel matrix $K_{X, X}$ is approximated by the KISS-GP \cite{wilson:Nickisch:2015} covariance matrix
\begin{equation}
    K_{X, X} \approx K_{KISS} = SK_{U, U}S^{\top}
\end{equation}
where $S$ is a sparse matrix of interpolation weights and $K_{U, U}$ is the covariance matrix evaluated over $u$ latent inducing points. Since $S$ is sparse and $K_{U, U}$ is structured, this approximation makes complexity of training GP layer by full batch data be $\mathcal{O}(N)$.
 
\subsection{Convergence analysis}
We prove that the block-wise training algorithm converges to a fixed point. The convergence guarantee is based on sufficient descent in loss per round of block-wise training as formally stated in Lemma \ref{lemma:sufficient_descent}. 
\begin{lemma} \label{lemma:sufficient_descent}
    \begin{equation*}
        \begin{aligned}
            & \mathbb{E}[\mathcal{L}(\mathbf{W}_{k}^{0}, \boldsymbol{\theta}_{k}^{0})] - \mathbb{E}[\mathcal{L}(\mathbf{W}_{k-1}^{0}, \boldsymbol{\theta}_{k-1}^{0} )] \\
            \leq & -\frac{1}{2}\sum_{\tau=0}^{T_1}\eta_{k\tau}^{\boldsymbol{\theta}} \mathbb{E}[\|\nabla_{\boldsymbol{\theta}} \mathcal{L}(\mathbf{W}_{k}^{0}, \boldsymbol{\theta}_{k-1}^{\tau} )\|^2]  -\frac{1}{2}\sum_{\tau=0}^{T_2}\eta_{kt}^{\mathbf{W}}\mathbb{E}[ \|\nabla_{\mathbf{W}} \mathcal{L}(\mathbf{W}_{k-1}^{\tau}, \boldsymbol{\theta}_{k-1}^0 )\|^2]  + \frac{1}{2}\mathbb{LM}\sum_{\tau=0}^{T_1} ({\eta_{k\tau}^{\mathbf{W}}})^2
        \end{aligned}
    \end{equation*}
    where $\mathbb{L}$ is the Lipschitz constant for the loss function and $\mathbb{M}$ is a non-negative scalar.
\end{lemma}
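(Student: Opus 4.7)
The plan is to decompose the per-round change in loss into the two block phases separately. Introducing the intermediate iterate $(\mathbf{W}_k^0, \boldsymbol{\theta}_{k-1}^0)$, I would write
\[
\mathcal{L}(\mathbf{W}_k^0, \boldsymbol{\theta}_k^0) - \mathcal{L}(\mathbf{W}_{k-1}^0, \boldsymbol{\theta}_{k-1}^0) = \underbrace{\bigl[\mathcal{L}(\mathbf{W}_k^0, \boldsymbol{\theta}_{k-1}^0) - \mathcal{L}(\mathbf{W}_{k-1}^0, \boldsymbol{\theta}_{k-1}^0)\bigr]}_{\text{W-phase}} + \underbrace{\bigl[\mathcal{L}(\mathbf{W}_k^0, \boldsymbol{\theta}_k^0) - \mathcal{L}(\mathbf{W}_k^0, \boldsymbol{\theta}_{k-1}^0)\bigr]}_{\boldsymbol{\theta}\text{-phase}}
\]
and bound each bracket by telescoping the classical descent lemma across the $T_1$ stochastic steps and the $T_2$ deterministic steps, respectively. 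The underlying regularity assumption is that $\mathcal{L}$ is $\mathbb{L}$-smooth in each coordinate block (with the other block held fixed), and that the mini-batch stochastic gradient $\mathbf{g}(\mathbf{W},\xi)$ is unbiased with variance bounded by $\mathbb{M}$.

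For the W-phase (mini-batch SGD with $\boldsymbol{\theta}_{k-1}^0$ frozen), $\mathbb{L}$-smoothness gives the per-step descent inequality
\[
\mathcal{L}(\mathbf{W}_{k-1}^{\tau+1},\boldsymbol{\theta}_{k-1}^0) \le \mathcal{L}(\mathbf{W}_{k-1}^{\tau},\boldsymbol{\theta}_{k-1}^0) - \eta_{k\tau}^{\mathbf{W}}\langle \nabla_{\mathbf{W}}\mathcal{L},\mathbf{g}\rangle + \tfrac{\mathbb{L}}{2}(\eta_{k\tau}^{\mathbf{W}})^2\|\mathbf{g}\|^2.
\]
Taking conditional expectation and using unbiasedness ($\mathbb{E}[\mathbf{g}]=\nabla_{\mathbf{W}}\mathcal{L}$) together with $\mathbb{E}\|\mathbf{g}\|^2 \le \|\nabla_{\mathbf{W}}\mathcal{L}\|^2+\mathbb{M}$, then selecting step sizes small enough that $1-\mathbb{L}\eta_{k\tau}^{\mathbf{W}}/2 \ge 1/2$ (i.e.\ $\eta_{k\tau}^{\mathbf{W}}\le 1/\mathbb{L}$), I obtain
\[
\mathbb{E}[\mathcal{L}(\mathbf{W}_{k-1}^{\tau+1},\boldsymbol{\theta}_{k-1}^0)] - \mathbb{E}[\mathcal{L}(\mathbf{W}_{k-1}^{\tau},\boldsymbol{\theta}_{k-1}^0)] \le -\tfrac{\eta_{k\tau}^{\mathbf{W}}}{2}\mathbb{E}\|\nabla_{\mathbf{W}}\mathcal{L}(\mathbf{W}_{k-1}^{\tau},\boldsymbol{\theta}_{k-1}^0)\|^2 + \tfrac{\mathbb{L}\mathbb{M}}{2}(\eta_{k\tau}^{\mathbf{W}})^2.
\]
Summing over $\tau=0,\dots,T_1-1$ yields the first gradient-norm sum and the residual $\tfrac{1}{2}\mathbb{L}\mathbb{M}\sum_\tau(\eta_{k\tau}^{\mathbf{W}})^2$. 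For the $\boldsymbol{\theta}$-phase, the gradient is computed on the full batch and is therefore deterministic, so the analogous smoothness argument gives $\mathcal{L}(\mathbf{W}_k^0,\boldsymbol{\theta}_{k-1}^{\tau+1})-\mathcal{L}(\mathbf{W}_k^0,\boldsymbol{\theta}_{k-1}^{\tau}) \le -\tfrac{\eta_{k\tau}^{\boldsymbol{\theta}}}{2}\|\nabla_{\boldsymbol{\theta}}\mathcal{L}(\mathbf{W}_k^0,\boldsymbol{\theta}_{k-1}^{\tau})\|^2$, with no variance remainder; telescoping over $\tau=0,\dots,T_2-1$ supplies the second gradient-norm sum. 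Adding the two phase bounds gives exactly the stated inequality (modulo the visible typos $T_1\leftrightarrow T_2$ and $\eta_{kt}^{\mathbf{W}}$ in the statement).

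The main obstacle is not the mechanical telescoping but the validity of the smoothness and variance hypotheses for this specific loss. The negative log marginal likelihood (Eq.~\ref{eq:nll}) contains $\log\lvert K_{X,X}+\sigma^2 I\rvert$ and a quadratic form in $(K_{X,X}+\sigma^2 I)^{-1}$, so Lipschitz smoothness in $\mathbf{W}$ (which enters through $\bar{\mathbf{X}}$) and in $\boldsymbol{\theta}=[\Theta,\sigma^2]$ requires uniform control on the minimum eigenvalue of $K_{X,X}+\sigma^2 I$ and on derivatives of the squared-exponential kernel; the noise term $\sigma^2 I$ plus the KISS-GP structure typically suffices, but this has to be asserted. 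Equally delicate is the stochastic-gradient assumption: because $\mathcal{L}$ is not a simple sum over data points, a mini-batch average is not automatically unbiased for $\nabla_{\mathbf{W}}\mathcal{L}$; one must either invoke a stochastic trace/inverse estimator or postulate an abstract unbiased surrogate with variance $\mathbb{M}$. I would collect these as explicit assumptions preceding the lemma and treat Lemma \ref{lemma:sufficient_descent} as a consequence of the generic nonconvex SGD+deterministic-GD block-coordinate descent analysis applied under them.
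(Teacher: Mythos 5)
Your proof follows essentially the same route as the paper's: the same decomposition through the intermediate iterate $(\mathbf{W}_k^0,\boldsymbol{\theta}_{k-1}^0)$, the same per-step descent inequality from block-wise Lipschitz smoothness, the same expectation plus second-moment bound for the stochastic phase (the paper's Assumption 2 uses the slightly more general bound $M + M_G\|\nabla_{\mathbf{W}}\mathcal{L}\|^2$ with step sizes $\eta < 1/(LM_G)$, of which your variance bound is the $M_G=1$ case), and the same telescoping and summation of the two phases. Your closing caveat that the mini-batch gradient of the GP marginal likelihood is not automatically unbiased is well taken; the paper does not address this and simply asserts its second-moment assumption with a citation.
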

The sufficient descent property leads to the convergence property of the proposed block-wise training algorithm as in Theorem \ref{theom1}.
\begin{theom} \label{theom1}
    Let $\{\mathbf{W}_k^0, \boldsymbol{\theta}_k^0\}$ be a sequence generated by the proposed training algorithm. The limit of this sequence converges to a stationary point where $\mathbb{E} [ \|\nabla\mathcal{L}(\mathbf{W}_k^0, \boldsymbol{\theta}_k^0) \|] = 0$ as $k \rightarrow \infty$.
\end{theom}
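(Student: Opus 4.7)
The plan is to telescope the sufficient-descent bound of Lemma~\ref{lemma:sufficient_descent} over outer rounds $k=1,\ldots,K$, exploit a lower bound on $\mathcal{L}$, and then invoke standard Robbins--Monro-type step-size conditions to force the weighted sum of squared gradient norms to converge, from which a vanishing-gradient conclusion follows.

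First, I would fix standard step-size assumptions, namely $\sum_{k,\tau}\eta_{k\tau}^{\mathbf{W}} = \sum_{k,\tau}\eta_{k\tau}^{\boldsymbol{\theta}} = \infty$ and $\sum_{k,\tau}(\eta_{k\tau}^{\mathbf{W}})^{2} < \infty$ (the $\boldsymbol{\theta}$-updates are deterministic full-batch, so no stochastic variance term appears for them). Summing the inequality of Lemma~\ref{lemma:sufficient_descent} from $k=1$ to $K$, the left-hand side telescopes to $\mathbb{E}[\mathcal{L}(\mathbf{W}_K^{0},\boldsymbol{\theta}_K^{0})] - \mathcal{L}(\mathbf{W}_0^{0},\boldsymbol{\theta}_0^{0})$. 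Because the negative log marginal likelihood in Eq.~\ref{eq:nll} is bounded below (the quadratic and log-determinant terms are non-negative once $\sigma^{2}$ is constrained away from zero, and the $\tfrac{N}{2}\log 2\pi$ term is a constant), this telescoped quantity is bounded below by a constant $-C$.

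Second, rearranging yields
$$\sum_{k=1}^{\infty}\Bigl(\sum_{\tau=0}^{T_1}\eta_{k\tau}^{\boldsymbol{\theta}}\,\mathbb{E}[\|\nabla_{\boldsymbol{\theta}}\mathcal{L}(\mathbf{W}_k^{0},\boldsymbol{\theta}_{k-1}^{\tau})\|^{2}] + \sum_{\tau=0}^{T_2}\eta_{k\tau}^{\mathbf{W}}\,\mathbb{E}[\|\nabla_{\mathbf{W}}\mathcal{L}(\mathbf{W}_{k-1}^{\tau},\boldsymbol{\theta}_{k-1}^{0})\|^{2}]\Bigr) \le 2C + \mathbb{L}\mathbb{M}\sum_{k,\tau}(\eta_{k\tau}^{\mathbf{W}})^{2} < \infty.$$
Together with $\sum_{k,\tau}\eta_{k\tau}^{\bullet} = \infty$, this summability forces $\liminf_{k\to\infty}\mathbb{E}[\|\nabla_{\boldsymbol{\theta}}\mathcal{L}\|^{2}] = \liminf_{k\to\infty}\mathbb{E}[\|\nabla_{\mathbf{W}}\mathcal{L}\|^{2}] = 0$ at the corresponding shifted iterates. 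To upgrade $\liminf$ to $\lim$, I would invoke a standard contradiction argument using $\mathbb{L}$-Lipschitz continuity of $\nabla\mathcal{L}$ to bound the oscillation of the squared gradient norm between consecutive iterates by a quantity proportional to the step size, which itself vanishes; this rules out gradient norms that infinitely often exceed any $\varepsilon > 0$.

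The main obstacle, and the part that requires the most care, is that Lemma~\ref{lemma:sufficient_descent} evaluates the block gradients at asynchronous iterates $(\mathbf{W}_k^{0},\boldsymbol{\theta}_{k-1}^{\tau})$ and $(\mathbf{W}_{k-1}^{\tau},\boldsymbol{\theta}_{k-1}^{0})$, whereas the theorem asks for the joint gradient at $(\mathbf{W}_k^{0},\boldsymbol{\theta}_k^{0})$. To close this gap I would use $\mathbb{L}$-Lipschitz continuity of $\nabla\mathcal{L}$ together with the bounded-displacement estimates $\|\mathbf{W}_{k-1}^{\tau} - \mathbf{W}_k^{0}\| \le \sum_{\tau}\eta_{k\tau}^{\mathbf{W}}\|\mathbf{g}(\cdot)\|$ and $\|\boldsymbol{\theta}_{k-1}^{\tau} - \boldsymbol{\theta}_k^{0}\| \le \sum_{\tau}\eta_{k\tau}^{\boldsymbol{\theta}}\|\mathbf{g}(\cdot)\|$, both of which tend to zero under the step-size assumptions. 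Hence $\|\nabla\mathcal{L}(\mathbf{W}_k^{0},\boldsymbol{\theta}_k^{0})\|$ differs from the gradients appearing in Lemma~\ref{lemma:sufficient_descent} by a vanishing remainder. Finally, Jensen's inequality $\mathbb{E}[\|\nabla\mathcal{L}\|] \le \sqrt{\mathbb{E}[\|\nabla\mathcal{L}\|^{2}]}$ converts squared-norm convergence into the stated conclusion $\mathbb{E}[\|\nabla\mathcal{L}(\mathbf{W}_k^{0},\boldsymbol{\theta}_k^{0})\|] \to 0$.
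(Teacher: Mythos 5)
Your proposal is correct and follows essentially the same route as the paper's proof: telescope the sufficient-descent bound of Lemma~\ref{lemma:sufficient_descent}, combine the lower bound on $\mathcal{L}$ with the Robbins--Monro step-size conditions to obtain summability of the weighted squared gradient norms, upgrade $\liminf$ to $\lim$ via the standard Lipschitz argument (the paper delegates this to Corollary 4.12 of Bottou--Curtis--Nocedal), and conclude with the triangle and Jensen inequalities. The one point of divergence is your Lipschitz-plus-bounded-displacement step for transferring the vanishing gradients from the asynchronous iterates to $(\mathbf{W}_k^0,\boldsymbol{\theta}_k^0)$; the paper sidesteps this because the partial gradients at the synchronized point already appear in the telescoped sums (at $\tau=0$ for the $\mathbf{W}$-block and $\tau=T_2$ for the $\boldsymbol{\theta}$-block), but your alternative is also sound.
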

The details of proof can be found in the supplementary materials. Note that the stationary point obtained by the block-wise training algorithm may not be the same as that in the full batch training algorithm, but empirical studies show that block-wise training algorithm converges to similar results faster than the full batch algorithm (see Fig. \ref{fig:train_loss}). In addition, the convergence property of the proposed training algorithm does not rely on any assumptions about the architecture of neural networks. Therefore, the proposed training algorithm can be applied to any hybrid models of neural networks with kernel machines, such as SVMs.

\section{Experiments} \label{sec:experiments}
\subsection{Sequence data}
In this study, three cases are used to demonstrate the effectiveness and reliability of the proposed Attentive-GP approach. The details of the selected data sets are described below.

\textbf{Robot arm\footnote{http://www.gaussianprocess.org/gpml/data/}} The data is generated from a seven-degree-of-freedom anthropomorphic robot arm, and collected at 100Hz from an actual robot performing various rhythmic movements. The input data consists of 21 dimensions, including positions, velocities and accelerations at the seven joins of the robot arm. The goal is to learn the dynamics of the torque of the robot arm given a sequence of 21-dimensional inputs. Learning the dynamics of the robot arm is challenging due to strong nonlinearity and measurement noise. A good generative model for robotic dynamics enables stochastic simulation of robot movements and can be further used to manipulate the robot arm torque using model-based control.

\textbf{Suspension system\footnote{http://www.nonlinearbenchmark.org/}} The suspensions in motor vehicles are composed of shock absorbers and progressive springs. The data in this application is generated from a second order linear time-invariant system with a third degree polynomial static nonlinearity around it in feedback. The input is external excitation and the output is the movement in the suspension system within a motorcycle. The suspension system is a nonlinear and fast-responsive system. A good generative model that well characterizes such nonlinear dynamic systems can be used in many mechanical oscillating processes within vehicles. 

\textbf{Smart grid\footnote{https://www.kaggle.com/c/global-energy-forecasting-competition-2012-load-forecasting/data}} The smart grid data is a sequence of hourly temperature measurements at 11 different cities (hence, dimensionality of 11) in the USA from 2004 to 2008. The task is to forecast the hourly electricity load on the entire grid. Forecasting the load on the grid over next dozens of hours is important in the utility industry for electricity generation. The uncertainty related to load on grid can also be used by the utility industry towards transmission and distribution operations.

For each data set, 80\% of samples are used for training, and the remaining 20\% of samples are used for testing. In addition, the proposed method is compared against RNN, LSTM, GRU, RNN encoder-decoder and Transformer (Attention). The details about the model architecture and training setup are listed in Appendix.

\subsection{The performance of block-wise training}
The proposed block-wise training algorithm is compared against the full batch training algorithm. For a fair comparison we adjust $T_1$ and $T_2$ of Algorithm~\ref{algorithm} as (Sample Size/Batch Size) and $1$, respectively. This way, the computation per backpropation is comparable between the full-batch and the block-wise training algorithms since each training sample is used once within one epoch in computing gradient for each parameter. The learning rate for $\mathbf{W}$ is 0.001 and the learning rate for $\boldsymbol{\theta}$ is 0.01 for both algorithms in the Adam optimizer \cite{kingma:adam:2015}. The training process is repeated 10 times for each data set with random initial conditions. The comparison between the two algorithms are shown in Fig. \ref{fig:train_loss}. 

The block-wise training algorithm converges to similar results faster than the full-batch algorithm in all three cases. The final values of training loss by both algorithms are shown in Figs. \ref{fig:train_loss_robot}, \ref{fig:train_loss_silver} and \ref{fig:train_loss_grid}, which demonstrate that the block-wise training algorithm can find comparable or even better solutions with less computation. Faster convergence of block-wise training algorithm can be attributed to multiple updates on $\mathbf{W}$ per epoch, while full batch algorithm only updates $\mathbf{W}$ once per epoch. In conclusion, the proposed block-wise training algorithm not only converges in theory and practice, but also requires less resources for computation and memory than the full batch training algorithm under appropriate settings. 
\begin{figure*}[ht]
    \centering
    \subfigure[]{\includegraphics[width=4.5cm]{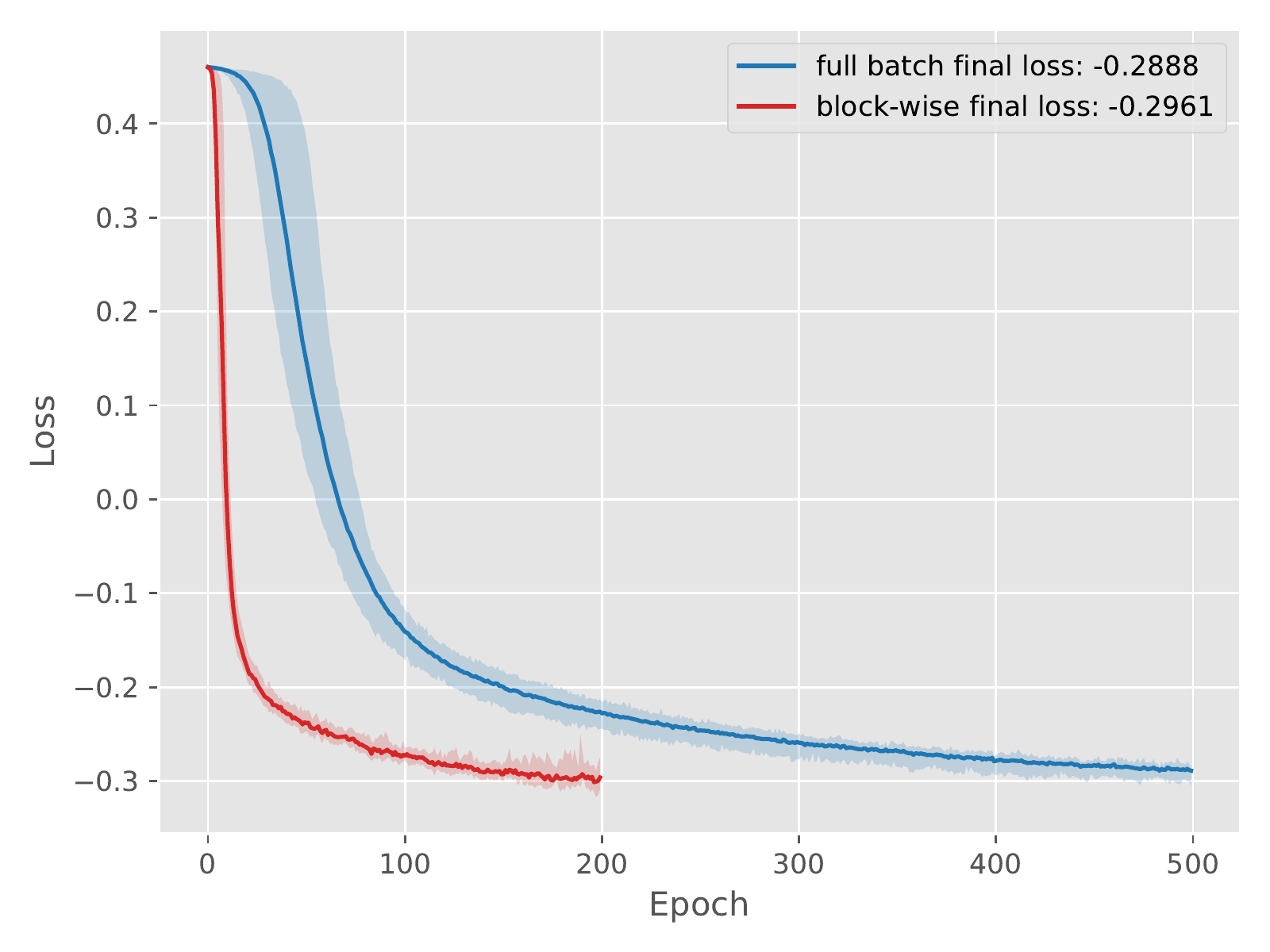} \label{fig:train_loss_robot}} 
    \subfigure[]{\includegraphics[width=4.5cm]{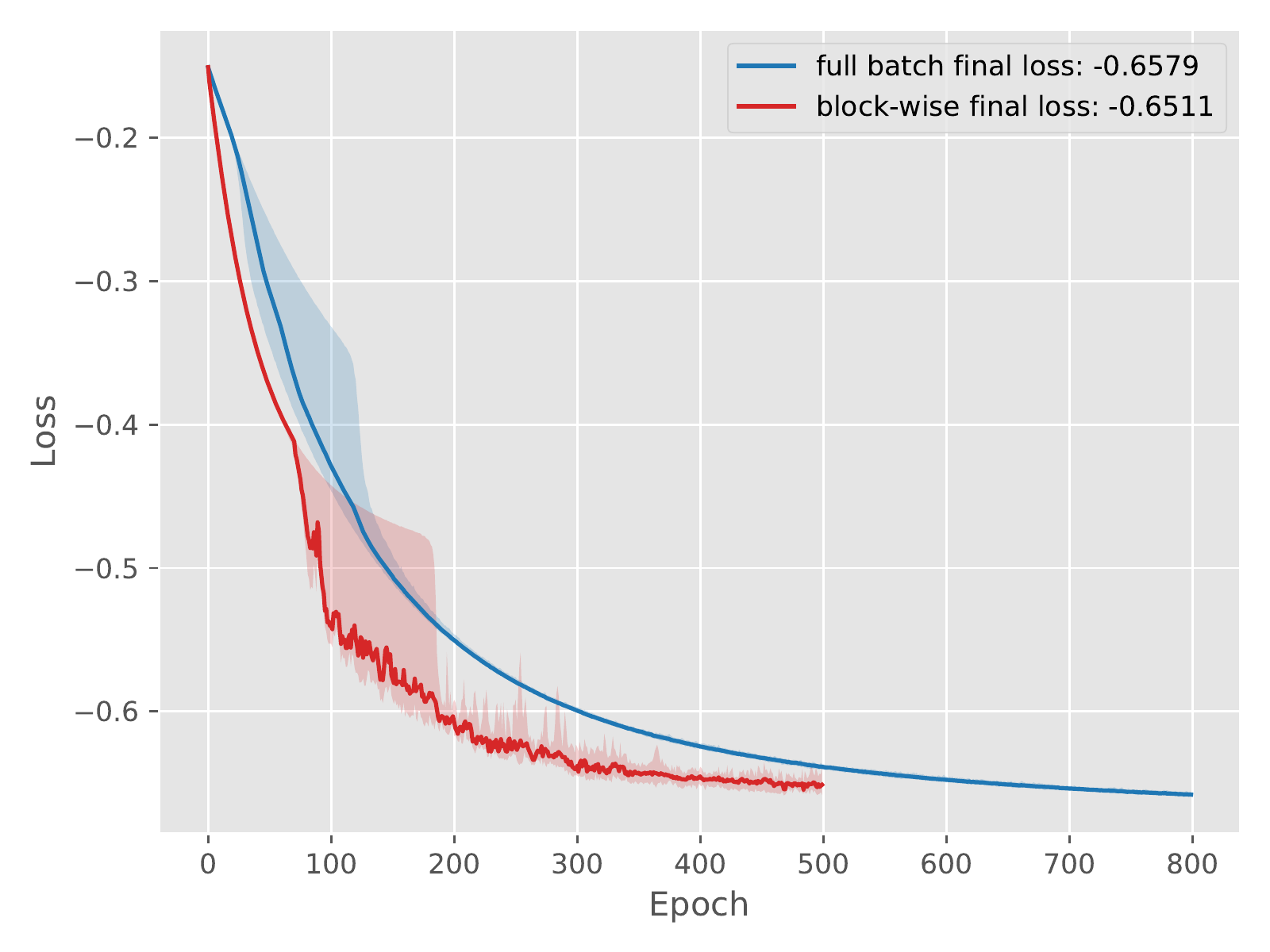} \label{fig:train_loss_silver}} 
    \subfigure[]{\includegraphics[width=4.5cm]{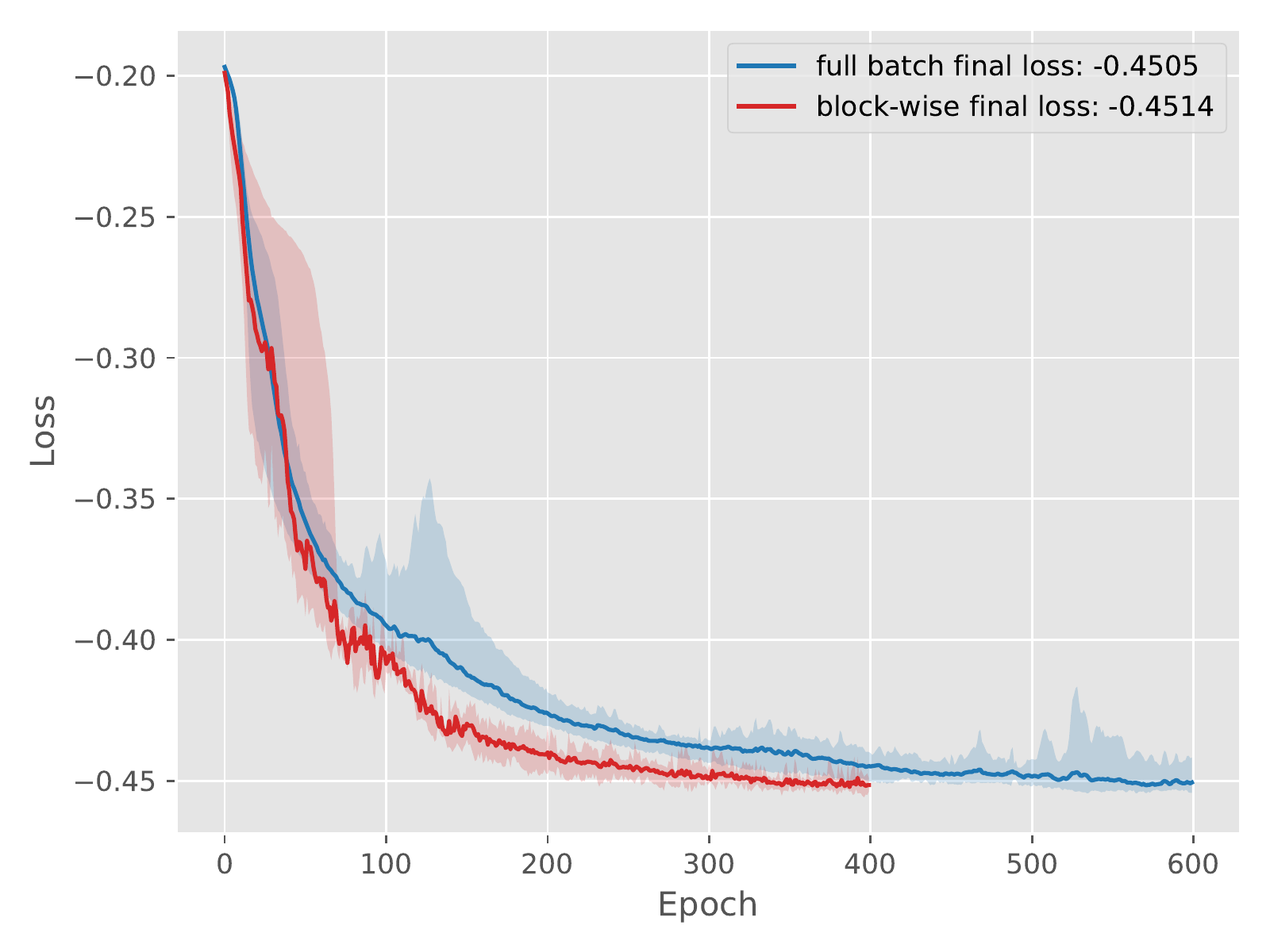} \label{fig:train_loss_grid}} 
    \caption{Comparison of loss (negative log marginal likelihood) between full batch and block-wise training algorithms. (a) Robot arm data; (b) Suspension position data; (c) Electricity load data.} \label{fig:train_loss}
\end{figure*}

\subsection{Accuracy comparison for prediction and generation tasks}

We compare the quality of sequences generated by our model against existing methods including encoder-decoder models and recurrence-based models. Currently, no universal criteria are available to evaluate generative models \cite{theis:etal:2016}. Unlike the generative models for language and images, it is less intuitive to compare generated time-series data. We measure the normalized root mean squared error (NRMSE) between the actual observed time-series in test data and the mean of the modelled generative distribution, based on the assumption that the average of samples should be close to the mean of the generative distribution.

The Attentive-GP learns the true generative distribution $p_{data}(\mathbf{y})$ of output sequences $\mathbf{y} = \{y_1, ..., y_L\}$ from which realized output sequences are to be sampled given input sequences. The generation task is performed by iteratively computing a new output $\hat{y}_t$ for $t=1,...,L$ given a previously {\em computed} output $\{\hat{y}_1,...,\hat{y}_{t-1}\}$ and the entire input sequence $\{\mathbf{x}_1,...,\mathbf{x}_L\}$ as inputs, whereas the prediction task is performed by predicting a new output $\hat{y}_t$ given the currently {\em observed} output $\{y_1,...,y_{t-1}\}$ and the entire input sequence $\{\mathbf{x}_1,...,\mathbf{x}_L\}$ as inputs. Therefore, accuracy of prediction task is significantly better than that of generation task across all methods considered. The NRMSE values for both generation and prediction tasks are reported in Table~\ref{tab:rmse}.

\begin{table}[ht]
    \begin{center}
        \begin{tabular}{| c | c | c | c | c | c | c |}
        \hline
             & \multicolumn{2}{c|}{Robot} & \multicolumn{2}{c|}{Suspension} & \multicolumn{2}{c|}{Grid} \\ 
        \cline{2-7}
               & Prediction                & Generation & Prediction & Generation & Prediction & Generation \\ \hline
RNN           & 1.3e-2\scriptsize{(7e-4)} & 4.9e-2\scriptsize{(9e-4)} & 1.2e-2\scriptsize{(5e-4)} & 4.4e-2\scriptsize{(8e-4)} & 3.5e-2\scriptsize{(6e-4)} & 9.7e-2\scriptsize{(8e-4)} \\ 
GRU           & 1.2e-2\scriptsize{(7e-4)} & 4.8e-2\scriptsize{(8e-4)} & 1.2e-2\scriptsize{(6e-4)} & 4.2e-2\scriptsize{(8e-4)} & 3.4e-2\scriptsize{(7e-4)} & 8.7e-2\scriptsize{(9e-4)} \\
LSTM          & 1.2e-2\scriptsize{(7e-4)} & 4.7e-2\scriptsize{(9e-4)} & 1.2e-2\scriptsize{(6e-4)} & 4.3e-2\scriptsize{(8e-4)} & 3.4e-2\scriptsize{(7e-4)} & 8.6e-2\scriptsize{(8e-4)} \\
\hline
RNN Enc-Dec   & 9.6e-3\scriptsize{(8e-4)} & 1.4e-2\scriptsize{(1e-3)} & 6.8e-3\scriptsize{(7e-4)} & 1.9e-2\scriptsize{(1e-3)} & 1.3e-2\scriptsize{(9e-4)} & 3.6e-2\scriptsize{(1e-3)} \\ 
Transformer   & 8.4e-3\scriptsize{(9e-4)} & 1.4e-2\scriptsize{(1e-3)} & 6.3e-3\scriptsize{(8e-4)} & 1.5e-2\scriptsize{(1e-3)} & 1.2e-2\scriptsize{(9e-4)} & 3.3e-2\scriptsize{(1e-3)} \\
Attentive-GP  & 5.5e-3\scriptsize{(1e-3)} & 1.1e-2\scriptsize{(1e-3)} & 6.1e-3\scriptsize{(1e-3)} & 1.3e-2\scriptsize{(1e-3)} & 1.2e-2\scriptsize{(9e-4)} & 3.2e-2\scriptsize{(1e-3)} \\
\hline
    \end{tabular}
    \end{center}  
    \caption{NRMSE of predicted and generated time-series. The NRMSE is calculated between the means of predictive/generative distributions and samples in the test data set.} \label{tab:rmse}
\end{table}

The encoder-decoder architectures (RNN Enc-Dec, Transformer and Attentive-GP) significantly outperform standalone RNNs and its variants such as LSTM and GRU. The encoder-decoder architectures learn the relationship between inputs and outputs across the entire sequence, while RNNs only learn the pair-wise relationship between the past sequence and the one-step ahead prediction. Considering that the dynamics in some time-series could be strongly nonlinear (e.g. robot arms),  such nonlinear dynamics cannot be easily modelled by predefined lag structure. The Transformer architecture leads to a better generation accuracy because the multi-head attention mechanism in Transformer is able to learn richer representation of sequences than the single-head attention mechanism in RNN encoder-decoder. Although the training data contains noise to increase the difficulty of learning, the proposed Attentive-GP approach achieves the best accuracy among all the methods. The superior performance of the proposed method can be attributed to the Bayesian framework within the GP layer, which naturally considers the noise in target sequences.

\begin{figure*}[ht]
    \centering
    \subfigure[]{\includegraphics[width=4.5cm]{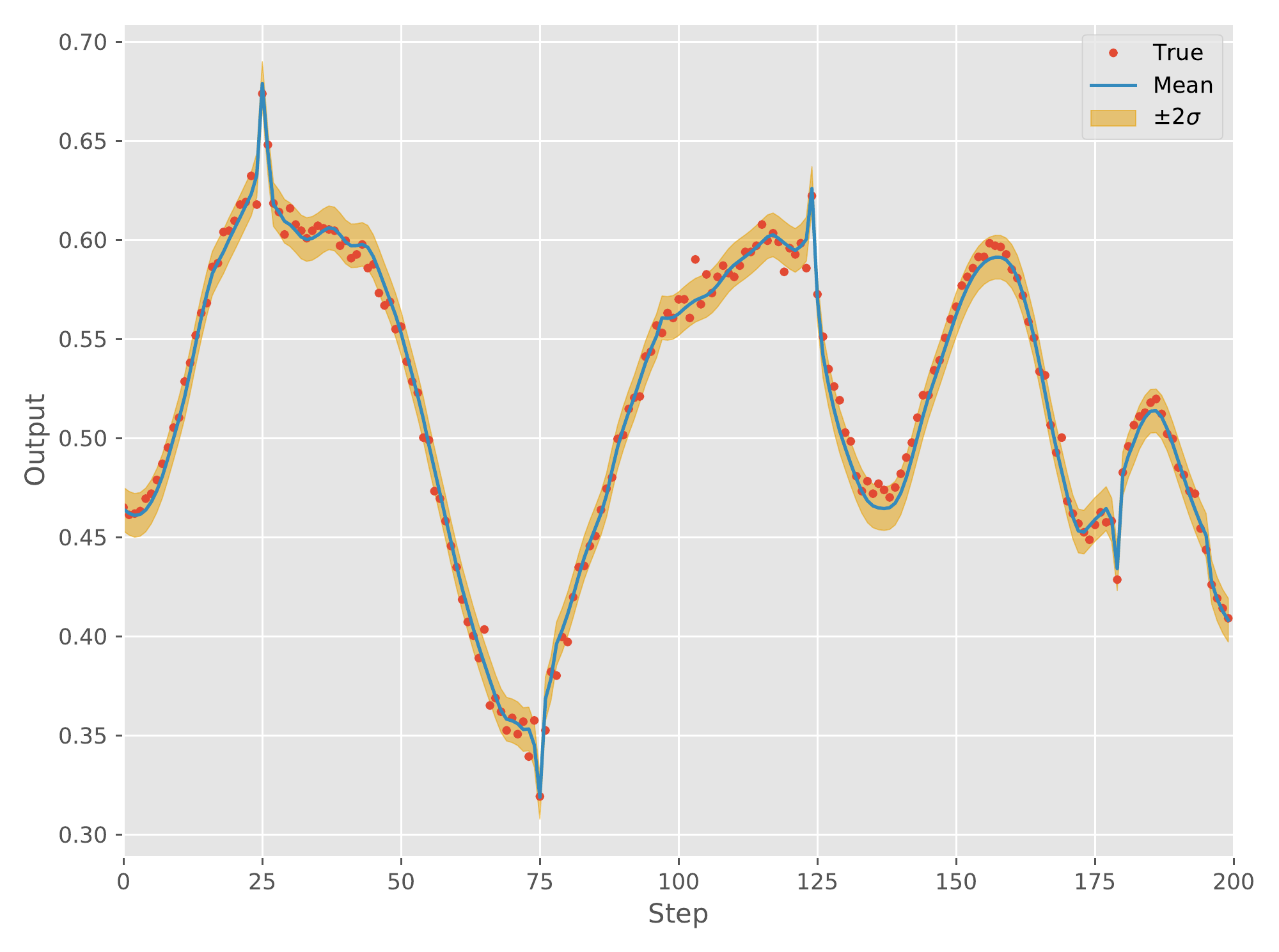} \label{fig:robot_generations}} 
    \subfigure[]{\includegraphics[width=4.5cm]{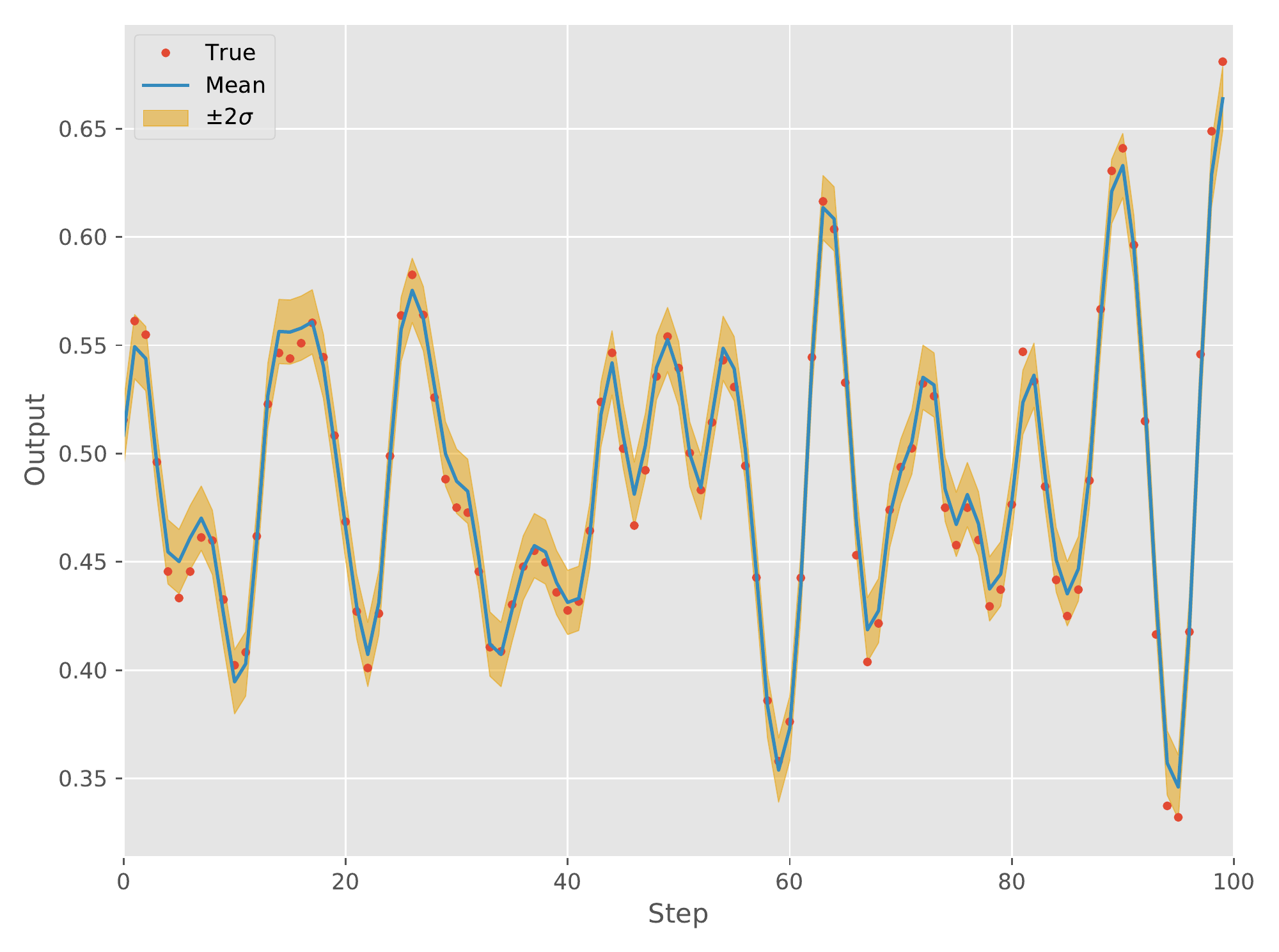} \label{fig:silver_generations}} 
    \subfigure[]{\includegraphics[width=4.5cm]{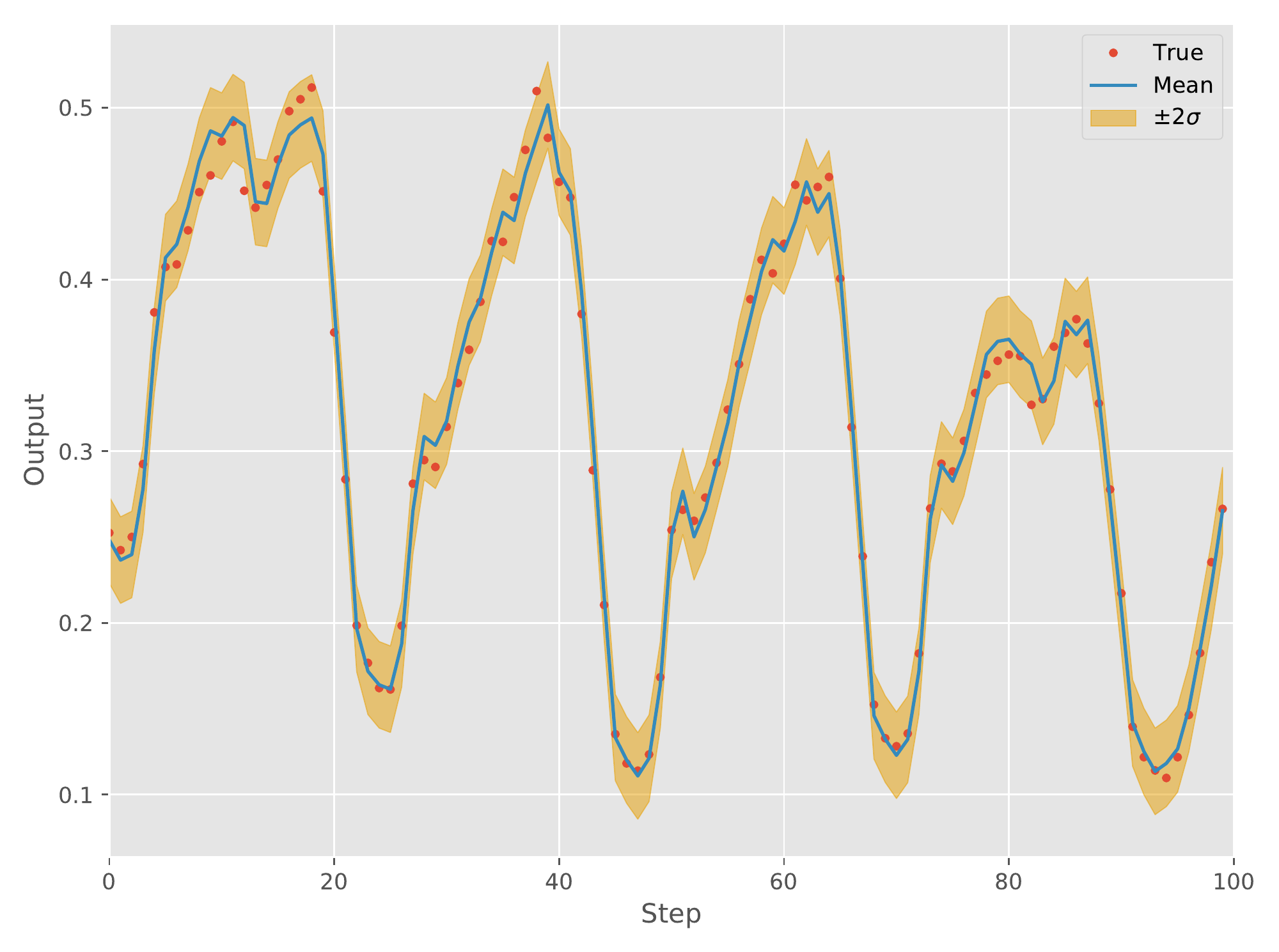} \label{fig:grid_generations}} 
    \caption{(a) Generated robot arm torque for 200 steps; (b) Generated suspension position for 100 steps; (c) Generated electricity load for 100 hours; The generative distribution is depicted with mean and shaded 2-sigma range. The data is normalized between [0, 1].}
\end{figure*}

The observed outputs in test data are plotted against the mean of the generative distribution along with $\pm 2\sigma$ interval in Figs. \ref{fig:robot_generations}, \ref{fig:silver_generations} and \ref{fig:grid_generations}. The fact that approximately 95\% of observed outputs are within the  $\pm 2\sigma$ interval indicates that the learned generative distribution from Attentive-GP is a good representation for the true generative distribution for output sequences. Sequences drawn from the learned generative distribution reflect the true dynamics in time-series and can be used for stochastic simulation.

\section{Related work} \label{sec:related}
The problem of generating time-series conditioned on external inputs is closely related to sequence-to-sequence learning. The encoder-decoders with attention mechanisms are the current state-of-the-art in sequence-to-sequence learning tasks \cite{Cho:etal:2014}. The attention mechanisms within encoder-decoders search for the most relevant information from the input and output sequences to predict the future outputs \cite{bahdanau:etal:2015}. Traditional encoder-decoders employ RNNs to embed the input and output sequences \cite{sutskever:etal:2014}, but the recurrent structure precludes the massive parallel computation on GPUs. Recently, the Transformer architecture is developed by computing dot-product attention multiple times (multi-head attention) directly on input and output sequences without RNN embedding, leading to significantly improved efficiency and accuracy \cite{vaswani:etal:2017}. However, the aforementioned encoder-decoders do not compute complete distributions for real-valued time-series. 

Recently, there has been some work on incorporating attention mechanisms into neural processes for probabilistic time-series prediction \cite{kim:etal:2019}. Inspired by GPs, neural processes learn to model distributions over functions via neural networks, and are able to estimate the uncertainty over their predictions \cite{garnelo:etal:2018a,garnelo:etal:2018b}. Under certain conditions, neural processes are mathematically equivalent to GPs \cite{rudner:etal:2018}, but it is not plausible to compare them directly because they are trained on different training regimes. In addition, they suffer a fundamental drawback of under-fitting \cite{kim:etal:2019}. GPs are chosen in this study because GPs are consistent stochastic processes with closed form variance functions.

We consider the combination of GPs and neural networks in our model because such combination leads to expressive model architecture with probabilistic outputs. From the perspective of deep learning, GP has been utilized as the last output layer in feedforward, convolutional and recurrent neural networks to regress sequences to reals with quantified uncertainty \cite{wilson:etal:2016,calandra:etal:2016}. On the other hand, combinations of GPs and neural networks are extensions of GPs with nonlinear warped inputs \cite{snelson:2004,lazaro:2012}. Although the hybrid model of RNNs and GPs can predict time-series probabilistically over long-horizons by iteratively feeding predicted outputs as inputs, it may not capture the true relationship between sequences due to the user-specified lag structure in time-series. The proposed method differs from prior work by utilizing GP as the last layer in encoder-decoder architecture to generate output sequences with uncertainty based on the most relevant information returned by the multi-head attention mechanism.

\section{Conclusions} \label{sec:conclusion}
A new encoder-decoder architecture for probabilistic time-series generation is proposed in this paper. The multi-head attention based encoder-decoder with a GP output layer, termed Attentive-GP, has strong feature extraction capability, while retaining the probabilistic Bayesian nonparametric representation. The proposed method outperforms a range of alternative approaches on sequence-to-sequence generation tasks. The Attentive-GP not only works on data with low to high noise levels, but also is scalable to time-series with different length. In short, the Attentive-GP provides a natural mechanism for Bayesian encoder-decoder, quantifying distributions for sequences while harmonizing with the neural networks based encoder-decoder.

The proposed block-wise training scheme can train the Attentive-GP model efficiently and effectively. With proved convergence property, this training algorithm is applicable to any hybrid models of neural networks with kernel machine output layers. An exciting direction for future research is to apply Attentive-GP in model-based control problems. 

\bibliography{../bib/abbreviations,../bib/articles,../bib/proceedings,../bib/books}
\bibliographystyle{plain}

\appendix

\section{Model Architecture}\label{sec:model_architecture}
\begin{table}[H]
    \begin{center}
        \begin{tabular}{ c  c  c  c }
        \hline
                            & Robot      & Suspension & Grid \\ 
        \hline 
        Linear Embedding    & 64         & 16         & 64    \\
        Attention Input Size & 64        & 32         & 64    \\
        Attention Layers    & 2          & 2          & 2     \\ 
        Attention Heads     & 2          & 2          & 2     \\
        Key Dimension       & 8          & 8          & 8     \\
        Query Dimension     & 8          & 8          & 8     \\
        Feature Extractor Output Size       & 4    & 2     & 4     \\
        \hline
        Sequence Length     & 100        & 100        & 24    \\
        Batch Size          & 1024       & 1024       & 64    \\
        Learning Rate $\mathbf{W}$  & 0.001 & 0.001  & 0.001  \\
        Learning Rate $\boldsymbol{\theta}$ & 0.01  & 0.01 & 0.01 \\
        Initial Noise in GP Layer  & 0.1 & 0.1 & 0.1 \\
        \hline
        \end{tabular}
    \end{center}  
    \caption{Model Architecture} \label{tab:model_architecture}
\end{table}

\section{Supplementary Material}\label{sec:supplementary_material}
Training the proposed Attentive-GP is a nonconvex optimization problem in the form of 
\begin{equation}
    \min_{\mathbf{W}, \boldsymbol{\theta}} \mathcal{L}(\mathbf{W}, \boldsymbol{\theta})
\end{equation}
where the decision variables are divided into two blocks $\mathbf{W}$ and $\boldsymbol{\theta}$. The proposed training algorithm is a special case of block coordinate descent algorithms because it minimizes $\mathcal{L}$ alternately between $\mathbf{W}$ and $\boldsymbol{\theta}$ while fixing the the other block at its last updated value \citeApp{wright:2015}. 

The convergence property of the block descent algorithms has been studied for convex objective functions \citeApp{beck2015,xu:yin2013}. Recently, the convergence property of the block descent algorithms for non-convex optimization problems has been analyzed  when all blocks are updated deterministically \citeApp{xu:yin:2017}. Their conclusion cannot be directly applied to our algorithm because $\mathbf{W}$ is updated by stochastic gradient and $\boldsymbol{\theta}$ is updated by deterministic gradient. Meanwhile, some attempts have been made to show the convergence analysis for stochastic block coordinate descent algorithms \citeApp{xu:yin:2015}. However, the decision variables in each block are updated by proximal stochastic gradient, not by pure stochastic gradient.

The convergence property of the proposed training algorithm is analyzed upon the ideas from \citeApp{beck2015} and \citeApp{xu:yin:2015}. There are several notable difference between the their settings and the one considered in this paper.
\begin{itemize}
    \item \citeApp{beck2015} and \citeApp{xu:yin:2015} consider an objective function that can be separated into continuously differentiable loss and (possibly) non-differentiable regularization. Our objective function does not contain any non-differentiable part.
    \item An additional proximal gradient update step is taken to tackle the (possibly) non-differentiable regularization in \citeApp{beck2015} and \citeApp{xu:yin:2015}. Pure (stochastic) gradient descent is taken in our algorithm. 
    \item The gradients in \citeApp{beck2015} for all blocks are computed deterministically, and the gradients in \citeApp{xu:yin:2015} for all blocks are stochastic. In our training algorithm, $\boldsymbol{\theta}$ is updated deterministically based on the entire data set, while $\mathbf{W}$ is is updated with stochastic gradient based on samples from a mini-batch. 
    \item In \citeApp{al:etal:2017,beck2015,xu:yin:2015}, each block descent step only includes one gradient descent step. In our algorithm, each block descent step contains multiple gradient descent steps.
\end{itemize}

We prove that the proposed training algorithm converges to a stationary point where $\mathbb{E} [ \|\nabla\mathcal{L}(\mathbf{W}_k^0, \boldsymbol{\theta}_k^0) \|] =0$.

\begin{assumption} \label{assumption:gradient_continuous}
    The derivatives of $\mathcal{L}$ w.r.t. $\mathbf{W}$ and $\boldsymbol{\theta}$ are uniformly Lipschitz with constant $L>0$
    \begin{equation}
        \begin{aligned}
            \left\|\nabla_{\boldsymbol{\theta}} \mathcal{L}(\mathbf{W}, \boldsymbol{\theta})-\nabla_{\boldsymbol{\theta}} \mathcal{L}(\mathbf{W}, \tilde{\boldsymbol{\theta}})\right\| & \leq L\|\boldsymbol{\theta}-\tilde{\boldsymbol{\theta}}\|  \\
            \quad\left\|\nabla_{\mathbf{W}} \mathcal{L}(\mathbf{W}, \boldsymbol{\theta})-\nabla_{\mathbf{W}} \mathcal{L}(\tilde{\mathbf{W}}, \boldsymbol{\theta})\right\| &\leq L\|\mathbf{W}-\tilde{\mathbf{W}}\|
        \end{aligned}
    \end{equation}
\end{assumption}
\begin{assumption} \label{assumption:gradient_variance}
    There exists scalars $M \geq 0$ and $M_G \geq 1$ such that, 
    \begin{equation}
        \mathbb{E}[\|\mathbf{g}(\mathbf{W}_{k}^{\tau}, \xi_k^{\tau})\|^2] \leq M + M_G  \|\nabla_{\mathbf{W}} \mathcal{L}(\mathbf{W}_{k}^{\tau}, \boldsymbol{\theta}_{k}^0)\|^2
    \end{equation}
\end{assumption}
\begin{assumption} \label{assumption:robbins_monro}
    The learning rates $0 < \eta^{\mathbf{W}}_k\tau < 1/LM_G$ and $0 < \eta^{\boldsymbol{\theta}}_k\tau < 1/LM_G$ are non increasing and satisfy the Robbins-Monre condition
    \begin{equation}
        \begin{aligned}
            \sum_{k=0}^{+\infty}\sum_{\tau=0}^{T_1} \eta_{k\tau}^{\mathbf{W}} & = \infty \\
            \sum_{k=0}^{+\infty}\sum_{\tau=0}^{T_2} \eta_{k\tau}^{\boldsymbol{\theta}} & = \infty \\
            \sum_{k=0}^{+\infty}\sum_{\tau=0}^{T_1} (\eta_{k\tau}^{\mathbf{W}})^2 & < \infty \\
            \sum_{k=0}^{+\infty}\sum_{\tau=0}^{T_2} (\eta_{k\tau}^{\boldsymbol{\theta}})^2 & < \infty
        \end{aligned}
    \end{equation} 
\end{assumption} 

We first show that the objective function has sufficient descent after each round of block-wise descent.

\begin{lemma} 
    \begin{equation}
        \begin{aligned}
            & \mathbb{E}[\mathcal{L}(\mathbf{W}_{k}^{0}, \boldsymbol{\theta}_{k}^{0})] - \mathbb{E}[\mathcal{L}(\mathbf{W}_{k-1}^{0}, \boldsymbol{\theta}_{k-1}^{0} )] \\
            \leq & -\frac{1}{2}\sum_{\tau=0}^{T_2}\eta_{k\tau}^{\boldsymbol{\theta}} \mathbb{E}[\|\nabla_{\boldsymbol{\theta}} \mathcal{L}(\mathbf{W}_{k}^{0}, \boldsymbol{\theta}_{k-1}^{\tau} )\|^2] \\
            &  -\frac{1}{2}\sum_{\tau=0}^{T_1}\eta_{kt}^{\mathbf{W}}\mathbb{E}[ \|\nabla_{\mathbf{W}} \mathcal{L}(\mathbf{W}_{k-1}^{\tau}, \boldsymbol{\theta}_{k-1}^0 )\|^2] \\
            & + \frac{1}{2}LM\sum_{\tau=0}^{T_1} ({\eta_{k\tau}^{\mathbf{W}}})^2
        \end{aligned}
    \end{equation}
\end{lemma}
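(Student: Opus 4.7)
The plan is to decompose the per-round change in the objective into two telescoping parts corresponding to the two inner loops of the algorithm, namely
\begin{equation*}
\mathcal{L}(\mathbf{W}_k^0, \boldsymbol{\theta}_k^0) - \mathcal{L}(\mathbf{W}_{k-1}^0, \boldsymbol{\theta}_{k-1}^0)
= \bigl[\mathcal{L}(\mathbf{W}_k^0, \boldsymbol{\theta}_k^0) - \mathcal{L}(\mathbf{W}_k^0, \boldsymbol{\theta}_{k-1}^0)\bigr]
+ \bigl[\mathcal{L}(\mathbf{W}_k^0, \boldsymbol{\theta}_{k-1}^0) - \mathcal{L}(\mathbf{W}_{k-1}^0, \boldsymbol{\theta}_{k-1}^0)\bigr],
\end{equation*}
using the identifications $\mathbf{W}_k^0 = \mathbf{W}_{k-1}^{T_1}$ and $\boldsymbol{\theta}_k^0 = \boldsymbol{\theta}_{k-1}^{T_2}$. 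The first bracket captures the $T_2$ deterministic full-batch updates on $\boldsymbol{\theta}$ at fixed $\mathbf{W}_k^0$; the second captures the $T_1$ stochastic mini-batch updates on $\mathbf{W}$ at fixed $\boldsymbol{\theta}_{k-1}^0$. On each bracket I will invoke the standard descent lemma obtained from the block-wise Lipschitz continuity of the gradient (Assumption~\ref{assumption:gradient_continuous}).

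For the stochastic $\mathbf{W}$-block, I would apply the descent lemma to the single step $\mathbf{W}_{k-1}^{\tau} = \mathbf{W}_{k-1}^{\tau-1} - \eta_{k\tau}^{\mathbf{W}} \mathbf{g}(\mathbf{W}_{k-1}^{\tau-1}, \xi_k^{\tau})$ and then take the conditional expectation with respect to the mini-batch $\xi_k^{\tau}$. Using unbiasedness of $\mathbf{g}$ (implicit in the algorithmic setup) and the variance bound in Assumption~\ref{assumption:gradient_variance}, this produces
\begin{equation*}
\mathbb{E}[\mathcal{L}(\mathbf{W}_{k-1}^{\tau}, \boldsymbol{\theta}_{k-1}^0)]
\le \mathbb{E}[\mathcal{L}(\mathbf{W}_{k-1}^{\tau-1}, \boldsymbol{\theta}_{k-1}^0)]
- \eta_{k\tau}^{\mathbf{W}}\Bigl(1 - \tfrac{L M_G \eta_{k\tau}^{\mathbf{W}}}{2}\Bigr)\mathbb{E}[\|\nabla_{\mathbf{W}}\mathcal{L}(\mathbf{W}_{k-1}^{\tau-1}, \boldsymbol{\theta}_{k-1}^0)\|^2]
+ \tfrac{L M (\eta_{k\tau}^{\mathbf{W}})^2}{2}.
\end{equation*}
The step-size restriction $\eta_{k\tau}^{\mathbf{W}} \le 1/(L M_G)$ in Assumption~\ref{assumption:robbins_monro} makes the coefficient in front of the squared-gradient term at least $\eta_{k\tau}^{\mathbf{W}}/2$. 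Telescoping $\tau = 1, \dots, T_1$ yields the $\mathbf{W}$-contribution and the noise term $\tfrac{1}{2} L M \sum_\tau (\eta_{k\tau}^{\mathbf{W}})^2$ exactly as in the lemma.

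For the deterministic $\boldsymbol{\theta}$-block, the same descent lemma applies with the exact gradient, producing the simpler inequality
\begin{equation*}
\mathcal{L}(\mathbf{W}_k^0, \boldsymbol{\theta}_{k-1}^{\tau})
\le \mathcal{L}(\mathbf{W}_k^0, \boldsymbol{\theta}_{k-1}^{\tau-1})
- \eta_{k\tau}^{\boldsymbol{\theta}}\bigl(1 - \tfrac{L\eta_{k\tau}^{\boldsymbol{\theta}}}{2}\bigr)\|\nabla_{\boldsymbol{\theta}}\mathcal{L}(\mathbf{W}_k^0, \boldsymbol{\theta}_{k-1}^{\tau-1})\|^2,
\end{equation*}
where the step-size bound in Assumption~\ref{assumption:robbins_monro} again pushes the factor above $1/2$ (since $M_G\ge 1$). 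Telescoping and taking expectation gives the $\boldsymbol{\theta}$-contribution with no stochastic residual. Summing the two telescoped bounds produces the claimed inequality.

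The main obstacle, which is not a difficulty of technique but of bookkeeping, is keeping the indexing of the iterates consistent between the two blocks: the gradient in the $\mathbf{W}$-sum is evaluated at $(\mathbf{W}_{k-1}^{\tau}, \boldsymbol{\theta}_{k-1}^0)$ while the gradient in the $\boldsymbol{\theta}$-sum is evaluated at $(\mathbf{W}_k^0, \boldsymbol{\theta}_{k-1}^{\tau})$, reflecting the fact that $\boldsymbol{\theta}$ is held fixed during the $\mathbf{W}$ loop and $\mathbf{W}$ is held fixed at its most recent value $\mathbf{W}_k^0 = \mathbf{W}_{k-1}^{T_1}$ during the $\boldsymbol{\theta}$ loop. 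The other mildly delicate point is handling the conditional expectation correctly: the gradient bound in Assumption~\ref{assumption:gradient_variance} is a conditional statement given the past iterates, so I would first apply it conditionally, then use the tower property before telescoping. Once these indexing issues are set up cleanly, the remainder reduces to two independent applications of the stochastic (resp. deterministic) descent lemma.
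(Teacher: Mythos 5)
Your proposal follows essentially the same route as the paper's proof: a descent-lemma bound on each single update (stochastic for $\mathbf{W}$ with the second-moment bound of Assumption~\ref{assumption:gradient_variance} and the step-size restriction giving the $\tfrac{1}{2}$ factor, deterministic for $\boldsymbol{\theta}$), followed by telescoping each inner loop and summing the two block inequalities. Your explicit handling of the conditional expectation via the tower property and the iterate indexing is actually slightly more careful than the paper's write-up, but the argument is the same.
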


\begin{proof}
    $\mathcal{L}$ is continuously differentiable because the squared exponential kernel function is infinitely differentiable and the loss function is differentiable w.r.t. to neural network weights. With Assumption \ref{assumption:gradient_continuous}, the following inequalities can be obtained using the second order Taylor series of  $\mathcal{L}(\mathbf{W}_{k}^{\tau-1}, \boldsymbol{\theta}_{k}^0 )$.
    \begin{equation} \label{eq:w_sufficient_descent_stochastic}
        \begin{aligned}
            \mathcal{L}(\mathbf{W}_{k}^{\tau}, \boldsymbol{\theta}_{k}^{0}, ) &  \leq \mathcal{L}(\mathbf{W}_{k}^{\tau-1}, \boldsymbol{\theta}_{k}^{0} ) \\
            & \quad + \nabla_{\mathbf{W}} \mathcal{L}(\mathbf{W}_{k}^{\tau-1}, \boldsymbol{\theta}_{k}^0)^{\top}(\mathbf{W}_{k}^{\tau} - \mathbf{W}_{k}^{\tau-1}) \\ 
            & \quad + \frac{L}{2}\|\mathbf{W}_{k}^{\tau} - \mathbf{W}_{k}^{\tau-1}\|^2 \\
            & = \mathcal{L}(\mathbf{W}_{k}^{\tau-1}, \boldsymbol{\theta}_{k}^{0} ) \\
            & \quad - \eta_{kt}^{\mathbf{W}}\nabla_{\mathbf{W}} \mathcal{L}(\mathbf{W}_{k}^{\tau-1}, \boldsymbol{\theta}_{k}^0 )^{\top}\mathbf{g}(\mathbf{W}_{k-1}^{\tau-1}, \xi_k^{\tau}) \\
            & \quad +\frac{{\eta_{k\tau}^{\mathbf{W}}}^2L}{2} \|\mathbf{g}(\mathbf{W}_{k-1}^{\tau-1}, \xi_k^{\tau})\|^2
        \end{aligned}  
    \end{equation}
    The desired bound is obtained by taking expectation w.r.t. $\xi_k^{\tau}$. In addition, $\mathbf{W}_{k}^{\tau}$ depends on $\xi_k^{\tau}$, but $\mathbf{W}_{k}^{\tau-1}$ does not. 
    
    The bound for $\mathcal{L}(\mathbf{W}_{k}^0, \boldsymbol{\theta}_{k}^{\tau})$ can be obtained in a more straightforward way because the gradient $\mathbf{g}(\boldsymbol{\theta}_{k}^{\tau-1}) = \nabla_{\boldsymbol{\theta}} \mathcal{L}(\mathbf{W}_{k}^0, \boldsymbol{\theta}_{k}^{\tau-1})$ is deterministic
\begin{equation} \label{eq:theta_sufficient_descent_deterministic}
    \begin{aligned}
        \mathcal{L}(\mathbf{W}_{k}^0, \boldsymbol{\theta}_{k}^{\tau}) &  \leq \mathcal{L}(\mathbf{W}_{k}^0, \boldsymbol{\theta}_{k}^{\tau-1} ) \\
        & \quad + \nabla_{\boldsymbol{\theta}} \mathcal{L}(\mathbf{W}_{k+1}, \boldsymbol{\theta}_{k})^{\top}(\boldsymbol{\theta}_{k+ 1} - \boldsymbol{\theta}_{k}) \\
        & \quad + \frac{L}{2}\|\boldsymbol{\theta}_{k+ 1} - \boldsymbol{\theta}_{k}\|^2 \\
        & = \mathcal{L}(\mathbf{W}_{k}^0, \boldsymbol{\theta}_{k}^{\tau-1} ) \\
        & \quad - \eta_{k\tau}^{\boldsymbol{\theta}} \|\nabla_{\boldsymbol{\theta}} \mathcal{L}(\mathbf{W}_{k}^{0}, \boldsymbol{\theta}_{k}^{\tau-1} )\|^2 \\
        & \quad + \frac{{\eta_{k\tau}^{\boldsymbol{\theta}}}^2L}{2} \|\nabla_{\boldsymbol{\theta}} \mathcal{L}(\mathbf{W}_{k}^{0}, \boldsymbol{\theta}_{k}^{\tau-1} )\|^2\\
        & \leq \mathcal{L}(\mathbf{W}_{k}^0, \boldsymbol{\theta}_{k}^{\tau-1} ) -\frac{\eta_{k\tau}^{\boldsymbol{\theta}}}{2} \|\nabla_{\boldsymbol{\theta}} \mathcal{L}(\mathbf{W}_{k}^{0}, \boldsymbol{\theta}_{k}^{\tau-1} )\|^2
    \end{aligned}  
\end{equation}
The objective function yielded in each (stochastic) gradient step is bounded by a quantity. It is clear that the right hand side of Eq. \ref{eq:theta_sufficient_descent_deterministic} is bounded by a deterministic quantify. The goal is to find a deterministic quantity to bound the right hand side of Eq. \ref{eq:w_sufficient_descent_stochastic}. Additional assumptions (Assumption \ref{assumption:gradient_variance}) on the second moment of the stochastic gradient $\mathbf{g}(\mathbf{W}_{k-1}^{\tau-1}, \xi_k^{\tau})$ are required to restrict the last term in Eq. \ref{eq:w_sufficient_descent_stochastic}. Assumption \ref{assumption:gradient_variance} is valid because the variance of stochastic gradient is bounded \citeApp{Jin:2019}.

Based on Assumption \ref{assumption:gradient_variance}
    \begin{equation*}
        \begin{aligned}
        & \mathbb{E}[\mathcal{L}(\mathbf{W}_{k}^{\tau}, \boldsymbol{\theta}_{k}^0)] - \mathcal{L}(\mathbf{W}_{k}^{\tau-1}, \boldsymbol{\theta}_{k}^0 ) \\
        & \leq -(1-\frac{1}{2}\eta_{kt}^{\mathbf{W}}LM_G)\eta_{kt}^{\mathbf{W}} \|\nabla_{\mathbf{W}} \mathcal{L}(\mathbf{W}_{k}^{\tau-1}, \boldsymbol{\theta}_{k}^0 )\|^2 \\
        & \quad + \frac{1}{2}( {\eta_{k\tau}^{\mathbf{W}}})^2LM
        \end{aligned}
    \end{equation*}
    We assume $\eta_{kt}^{\mathbf{W}}LM_G < 1$ in Assumption \ref{assumption:robbins_monro}, so that
    \begin{equation*}
        -(1-\frac{1}{2}\eta_{kt}^{\mathbf{W}}LM_G) < -\frac{1}{2}
    \end{equation*}
    Based on the descent property for each gradient step, it is straightforward to show the descent property in each block descent step by summing up the gradient steps. By definition, $\mathbf{W}_{k}^{0}$ is the same as $\mathbf{W}_{k-1}^{T}$ and $\boldsymbol{\theta}_{k}^{0}$ is the same as $\boldsymbol{\theta}_{k-1}^{T}$.
    \begin{equation}
        \begin{aligned}
            & \mathbb{E}[\mathcal{L}(\mathbf{W}_{k}^{0}, \boldsymbol{\theta}_{k-1}^{0})] - \mathbb{E}[\mathcal{L}(\mathbf{W}_{k-1}^{0}, \boldsymbol{\theta}_{k-1}^{0} )]\\ 
            \leq & -\frac{1}{2}\sum_{\tau=0}^{T_1}\eta_{kt}^{\mathbf{W}}\mathbb{E}[ \|\nabla_{\mathbf{W}} \mathcal{L}(\mathbf{W}_{k-1}^{\tau}, \boldsymbol{\theta}_{k-1}^0 )\|^2] \\
            &  + \frac{1}{2}LM\sum_{\tau=0}^{T_1} ({\eta_{k\tau}^{\mathbf{W}}})^2
        \end{aligned}
    \end{equation}
    
    \begin{equation}
        \begin{aligned}
            & \mathbb{E}[\mathcal{L}(\mathbf{W}_{k}^{0}, \boldsymbol{\theta}_{k}^{0})] - \mathbb{E}[\mathcal{L}(\mathbf{W}_{k}^{0}, \boldsymbol{\theta}_{k-1}^{0} )] \\
            \leq & -\frac{1}{2}\sum_{\tau=0}^{T_2}\eta_{k\tau}^{\boldsymbol{\theta}} \mathbb{E}[\|\nabla_{\boldsymbol{\theta}} \mathcal{L}(\mathbf{W}_{k}^{0}, \boldsymbol{\theta}_{k-1}^{\tau} )\|^2]
        \end{aligned}
    \end{equation}
    
    Summing the above two inequalities leads to Lemma \ref{lemma:sufficient_descent}.
\end{proof}

\begin{theom} \label{theom:gradient_lim_0}
    Under Assumptions \ref{assumption:gradient_continuous}, \ref{assumption:gradient_variance} and \ref{assumption:robbins_monro}, we have
    \begin{equation}
        \lim_{k \rightarrow \infty} \mathbb{E} [\|\nabla_{\mathbf{W}} \mathcal{L}(\mathbf{W}_{k}^{\tau}, \boldsymbol{\theta}_{k}^0 )\|] =0 \quad \forall \tau \in [0, T_1]
    \end{equation}
    \begin{equation}
        \lim_{k \rightarrow \infty}  \mathbb{E} [ \|\nabla_{\boldsymbol{\theta}} \mathcal{L}(\mathbf{W}_{k}^{0}, \boldsymbol{\theta}_{k}^{\tau} )\|] =0 \quad \forall \tau \in [0, T_2]
    \end{equation}
\end{theom}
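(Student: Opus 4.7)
The plan is a three-step argument: first, telescope Lemma 1 to obtain weighted summability of the expected squared gradient norms; second, combine with divergence of the step-size sums and a standard excursion argument to promote $\liminf$ to $\lim$; third, spread the limit across every inner index $\tau$ via Lipschitz continuity and Assumption 2.

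For the first step, I would assume $\mathcal{L}$ is bounded below by some $\mathcal{L}^{\star}$ (which holds for the GP negative log marginal likelihood under mild regularity). Summing Lemma 1 from $k=1$ to $K$ and passing $K\to\infty$, the left-hand side telescopes to a quantity bounded below by $\mathcal{L}^{\star} - \mathcal{L}(\mathbf{W}_0^0,\boldsymbol{\theta}_0^0)$, while the variance term $\tfrac12 L M \sum_k\sum_\tau (\eta_{k\tau}^{\mathbf{W}})^2$ is finite by Assumption 3. Rearranging yields
\begin{equation*}
\sum_{k=1}^{\infty}\sum_{\tau=0}^{T_1}\eta_{k\tau}^{\mathbf{W}}\,\mathbb{E}\bigl[\|\nabla_{\mathbf{W}}\mathcal{L}(\mathbf{W}_{k-1}^{\tau},\boldsymbol{\theta}_{k-1}^0)\|^2\bigr] < \infty,
\end{equation*}
and the analogous bound for the $\boldsymbol{\theta}$ gradients. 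Combined with $\sum_k\sum_\tau \eta_{k\tau}^{\mathbf{W}} = \infty$ this forces $\liminf_{k\to\infty}\min_{\tau}\mathbb{E}\|\nabla_{\mathbf{W}}\mathcal{L}(\mathbf{W}_{k-1}^{\tau},\boldsymbol{\theta}_{k-1}^0)\|^2 = 0$. To upgrade $\liminf$ to $\lim$, I would run the classical excursion contradiction: if the $\limsup$ were some $3\epsilon > 0$, infinitely many index windows would bridge the expected squared gradient norm from $\epsilon$ up to $2\epsilon$; Assumptions 1 and 2 bound the per-step fluctuation of $\mathbb{E}\|\nabla\|^2$ by a constant times $\eta_{k\tau}^{\mathbf{W}}$, so every such window costs a uniform total $\eta$-weight $\geq c$ and contributes at least $\epsilon\cdot c$ to the above sum, contradicting finiteness.

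To transfer the limit from a reference iterate to every $\tau \in [0,T_1]$, I would use the update rule
\begin{equation*}
\mathbf{W}_{k-1}^{\tau} - \mathbf{W}_{k-1}^{0} = -\sum_{s=1}^{\tau}\eta_{ks}^{\mathbf{W}}\,\mathbf{g}(\mathbf{W}_{k-1}^{s-1},\xi_k^{s}),
\end{equation*}
so by Assumption 2 and square-summability of $\eta^{\mathbf{W}}$, $\mathbb{E}\|\mathbf{W}_{k-1}^{\tau} - \mathbf{W}_{k-1}^{0}\|^2 \to 0$ as $k\to\infty$ (boundedness of $\mathbb{E}\|\nabla_{\mathbf{W}}\mathcal{L}\|^2$ along the sequence, needed here, is itself a byproduct of the descent inequality in Lemma 1). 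Lipschitz continuity of $\nabla_{\mathbf{W}}\mathcal{L}$ (Assumption 1) then propagates the limit across every inner $\tau$, and Jensen's inequality turns $\mathbb{E}\|\nabla\|^2\to 0$ into the stated $\mathbb{E}\|\nabla\|\to 0$. The $\boldsymbol{\theta}$-block argument is directly analogous but strictly simpler, because those updates are deterministic so there is no variance term and $\|\boldsymbol{\theta}_{k}^{\tau}-\boldsymbol{\theta}_{k}^{0}\|$ is bounded deterministically by a sum of vanishing gradient-step sizes.

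The hard part is the $\liminf \Rightarrow \lim$ upgrade in the second step: Assumption 2's bound on $\mathbb{E}\|\mathbf{g}\|^2$ is self-referential, since it contains $\|\nabla\mathcal{L}\|^2$, which is precisely the quantity being controlled. The step-size cap $\eta L M_G < 1$ from Assumption 3 is what breaks the circularity, collapsing the self-reference into a benign $(1+o(1))$ multiplicative factor and making the per-excursion cost uniformly bounded away from zero; tracking these constants carefully (rather than absorbing them into $O(\cdot)$) is the main bookkeeping burden of the proof.
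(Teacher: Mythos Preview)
Your proposal is correct and shares the same backbone as the paper: telescope Lemma~1, use a lower bound on $\mathcal{L}$ together with Assumption~3 to obtain weighted summability of the expected squared gradient norms, then upgrade $\liminf$ to $\lim$. The paper is much terser on that last point: after rearranging to bound $\sum_k\sum_\tau \eta_{k\tau}\,\mathbb{E}[\|\nabla\|^2]$, it lower-bounds the left side by $KT\eta\cdot\inf_{k,\tau}\mathbb{E}[\|\nabla\|^2]$ with $\eta=\min_{k,\tau}\eta_{k\tau}$, divides by $K$, and then simply invokes Corollary~4.12 of Bottou--Curtis--Nocedal (using twice-differentiability of $\mathcal{L}$) to drop the infimum. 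Your explicit excursion argument is essentially what that corollary proves, so you are doing the same thing by hand; your third step---spreading the limit across each inner index $\tau$ via Lipschitz continuity and vanishing step sizes---is extra care that the paper omits, treating the double-indexed sequence as if it were a single one. Your version is more self-contained and arguably more rigorous (the paper's use of $\min_{k,\tau}\eta_{k\tau}>0$ sits uneasily with a Robbins--Monro schedule), while the paper's buys brevity by outsourcing the key analytic step to an external reference.
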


\begin{proof}
    The loss function $\mathcal{L}$ is a valid optimization objective function and it must have a lower bound $\mathcal{L}_{\inf}$. Therefore, 
    \begin{equation*}
        \begin{aligned}
             & \mathcal{L}_{\inf} - \mathbb{E}[\mathcal{L}(\mathbf{W}_k^0, \boldsymbol{\theta}_k^0)] \\
             \leq & \mathbb{E}[\mathcal{L}(\mathbf{W}_{K}^{0}, \boldsymbol{\theta}_{K}^{0})] - \mathbb{E}[\mathcal{L}(\mathbf{W}_{0}^{0}, \boldsymbol{\theta}_{0}^{0} )] \\
             \leq & -\frac{1}{2}\sum_{k=0}^K\sum_{\tau=0}^{T_2}\eta_{k\tau}^{\boldsymbol{\theta}} \mathbb{E}[\|\nabla_{\boldsymbol{\theta}} \mathcal{L}(\mathbf{W}_{k}^{0}, \boldsymbol{\theta}_{k}^{\tau} )\|^2] \\
             & -\frac{1}{2}\sum_{k=0}^K\sum_{\tau=0}^{T_1}\eta_{kt}^{\mathbf{W}}\mathbb{E}[ \|\nabla_{\mathbf{W}} \mathcal{L}(\mathbf{W}_{k}^{\tau}, \boldsymbol{\theta}_{k}^0 )\|^2] \\
             & + \frac{1}{2}LM\sum_{k=0}^K\sum_{\tau=0}^{T_1} ({\eta_{k\tau}^{\mathbf{W}}})^2
        \end{aligned}
    \end{equation*}
    The following inequality can be obtained by rearranging the previous inequality.
    \begin{equation*}
        \begin{aligned}
            & \sum_{k=0}^K\sum_{\tau=0}^{T_2}\eta_{k\tau}^{\boldsymbol{\theta}} \mathbb{E}[\|\nabla_{\boldsymbol{\theta}} \mathcal{L}(\mathbf{W}_{k}^{0}, \boldsymbol{\theta}_{k}^{\tau} )\|^2] \\
            & + \sum_{k=0}^K\sum_{\tau=0}^{T_1}\eta_{kt}^{\mathbf{W}}\mathbb{E}[ \|\nabla_{\mathbf{W}} \mathcal{L}(\mathbf{W}_{k}^{\tau}, \boldsymbol{\theta}_{k}^0 )\|^2] \\
            \leq & 2(\mathbb{E}[\mathcal{L}(\mathbf{W}_k^0, \boldsymbol{\theta}_k^0)] - \mathcal{L}_{\inf}) + LM\sum_{k=0}^K\sum_{\tau=0}^{T_1} ({\eta_{k\tau}^{\mathbf{W}}})^2
        \end{aligned}     
    \end{equation*}
    Taking the infimum of the left hand side leads to
    \begin{equation*}
        \begin{aligned}
            & KT\eta^{\boldsymbol{\theta}}\inf\{ \mathbb{E}[\|\nabla_{\boldsymbol{\theta}} \mathcal{L}(\mathbf{W}_{k}^{0}, \boldsymbol{\theta}_{k}^{\tau} )\|^2] \} \\
            & + KT\eta^{\mathbf{W}}\inf\{ \mathbb{E}[ \|\nabla_{\mathbf{W}} \mathcal{L}(\mathbf{W}_{k}^{\tau}, \boldsymbol{\theta}_{k}^0 )\|^2] \} \\
            \leq  & 2(\mathbb{E}[\mathcal{L}(\mathbf{W}_k^0, \boldsymbol{\theta}_k^0)] - \mathcal{L}_{\inf}) + LM\sum_{k=0}^K\sum_{\tau=0}^{T_1} ({\eta_{k\tau}^{\mathbf{W}}})^2
        \end{aligned}
    \end{equation*}
    where $0 < \eta^{\boldsymbol{\theta}} = \min \{\eta_{k\tau}^{\boldsymbol{\theta}} \}$ and $0 < \eta^{\mathbf{W}} = \min \{\eta_{kt}^{\mathbf{W}}\}$. Dividing $K$ on both sides, we have
    \begin{equation*}
        \begin{aligned}
            & T\eta^{\boldsymbol{\theta}}\inf\{ \mathbb{E}[\|\nabla_{\boldsymbol{\theta}} \mathcal{L}(\mathbf{W}_{k}^{0}, \boldsymbol{\theta}_{k}^{\tau} )\|^2] \} \\
            & + T\eta^{\mathbf{W}}\inf\{ \mathbb{E}[ \|\nabla_{\mathbf{W}} \mathcal{L}(\mathbf{W}_{k}^{\tau}, \boldsymbol{\theta}_{k}^0 )\|^2] \} \\
            \leq  & \frac{1}{K}[2(\mathbb{E}[\mathcal{L}(\mathbf{W}_k^0, \boldsymbol{\theta}_k^0)] - \mathcal{L}_{\inf}) + LM\sum_{k=0}^K\sum_{\tau=0}^{T_1} ({\eta_{k\tau}^{\mathbf{W}}})^2]
        \end{aligned}
    \end{equation*}
    Since $2(\mathbb{E}[\mathcal{L}(\mathbf{W}_k^0, \boldsymbol{\theta}_k^0)] - \mathcal{L}_{\inf})$ must be bounded, and $LM\sum_{k=0}^K\sum_{\tau=0}^{T_1} ({\eta_{k\tau}^{\mathbf{W}}})^2$ converges to a finite limit when $K$ increases due to Assumption \ref{assumption:robbins_monro}. Therefore, the right hand side of the inequality converges to 0 as $K \rightarrow \infty$. Note that $T$, $\eta^{\boldsymbol{\theta}}$ and $\eta^{\mathbf{W}}$ are finite scalars. Consequently, the sequences of $\inf \{\mathbb{E}[\|\nabla_{\mathbf{W}} \mathcal{L}(\mathbf{W}_{k}^{\tau}, \boldsymbol{\theta}_{k}^0 )\|^2] \}$ and $\inf \{\mathbb{E}[\|\nabla_{\boldsymbol{\theta}} \mathcal{L}(\mathbf{W}_{k}^{0}, \boldsymbol{\theta}_{k}^{\tau} )\|^2] \}$ must converge to 0.
    \begin{equation*}
        \lim_{k \rightarrow \infty} \inf \mathbb{E} [\|\nabla_{\mathbf{W}} \mathcal{L}(\mathbf{W}_{k}^{\tau}, \boldsymbol{\theta}_{k}^0 )\|^2] =0 \quad \forall \tau \in [0, T_1]
    \end{equation*}
    \begin{equation*}
        \lim_{k \rightarrow \infty} \inf \mathbb{E} [ \|\nabla_{\boldsymbol{\theta}} \mathcal{L}(\mathbf{W}_{k}^{0}, \boldsymbol{\theta}_{k}^{\tau} )\|^2] =0 \quad \forall \tau \in [0, T_2]
    \end{equation*}
    $\mathcal{L}$ is twice differentiable in our case. Therefore, the infimum of the above two equations can be omitted according to Corollary 4.12 in \citeApp{bottou:Curtis:2018}. Consequently, the partial derivatives converge to 0 as $k \rightarrow \infty$.
\end{proof}

\begin{theom}
    Let $\{\mathbf{W}_k^0, \boldsymbol{\theta}_k^0\}$ be a sequence generated by the proposed training algorithm. The accumulation point of this sequence converges to a stationary point where $\mathbb{E} [ \|\nabla\mathcal{L}(\mathbf{W}_k^0, \boldsymbol{\theta}_k^0) \|] = 0$ as $k \rightarrow \infty$.
\end{theom}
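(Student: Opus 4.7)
The plan is to deduce the final theorem as a direct corollary of Theorem \ref{theom:gradient_lim_0}, which already establishes that each partial gradient vanishes in expectation for every inner index $\tau$. The key observation is that the full gradient $\nabla \mathcal{L}$ decomposes into its $\mathbf{W}$ and $\boldsymbol{\theta}$ components, so controlling the expected norm of the full gradient reduces to controlling the expected norms of the two partial gradients separately.

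First I would specialize Theorem \ref{theom:gradient_lim_0} at $\tau = 0$, yielding both $\lim_{k \to \infty} \mathbb{E}[\|\nabla_{\mathbf{W}} \mathcal{L}(\mathbf{W}_k^0, \boldsymbol{\theta}_k^0)\|] = 0$ and $\lim_{k \to \infty} \mathbb{E}[\|\nabla_{\boldsymbol{\theta}} \mathcal{L}(\mathbf{W}_k^0, \boldsymbol{\theta}_k^0)\|] = 0$. These instantiations are legitimate because $\mathbf{W}_k^0 = \mathbf{W}_{k-1}^{T_1}$ and $\boldsymbol{\theta}_k^0 = \boldsymbol{\theta}_{k-1}^{T_2}$ by the update rule in Algorithm \ref{algorithm}, so the pair $(\mathbf{W}_k^0, \boldsymbol{\theta}_k^0)$ indeed appears at an inner index covered by that theorem. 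Next I would use the Euclidean decomposition $\|\nabla \mathcal{L}\|^2 = \|\nabla_{\mathbf{W}} \mathcal{L}\|^2 + \|\nabla_{\boldsymbol{\theta}} \mathcal{L}\|^2$ together with the subadditivity bound $\|\nabla \mathcal{L}\| \leq \|\nabla_{\mathbf{W}} \mathcal{L}\| + \|\nabla_{\boldsymbol{\theta}} \mathcal{L}\|$, take expectations on both sides, and conclude by the monotonicity of the limit that $\mathbb{E}[\|\nabla \mathcal{L}(\mathbf{W}_k^0, \boldsymbol{\theta}_k^0)\|] \to 0$.

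For the accumulation-point part of the statement, I would invoke Assumption \ref{assumption:gradient_continuous}: the uniform Lipschitz property makes $\nabla \mathcal{L}$ continuous, so along any subsequence $\{(\mathbf{W}_{k_j}^0, \boldsymbol{\theta}_{k_j}^0)\}$ converging to an accumulation point $(\mathbf{W}^\star, \boldsymbol{\theta}^\star)$, the gradient values transfer continuously and the vanishing of the expected gradient norm carries over, identifying $(\mathbf{W}^\star, \boldsymbol{\theta}^\star)$ as a stationary point in the stated sense.

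The main obstacle is the slightly informal phrasing of the statement: the iterates $\{\mathbf{W}_k^0, \boldsymbol{\theta}_k^0\}$ are random, accumulation points classically live in the sample-path space, and yet the stationarity criterion is written as an expectation. Read literally as $\mathbb{E}[\|\nabla \mathcal{L}(\mathbf{W}_k^0, \boldsymbol{\theta}_k^0)\|] \to 0$, the proof collapses to the triangle-inequality argument above; but a genuine pointwise statement about almost-sure accumulation points would require an additional uniform integrability or almost-sure convergence argument to move the vanishing from inside the expectation to the limit point, which Theorem \ref{theom:gradient_lim_0} does not directly supply and which I would therefore flag as the gap one must either tighten in the hypotheses or paper over by adopting the expectation-based reading.
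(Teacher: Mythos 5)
Your proposal is correct and follows essentially the same route as the paper: instantiate Theorem \ref{theom:gradient_lim_0} at $\tau=0$ so that both partial gradients vanish in expectation at $(\mathbf{W}_k^0,\boldsymbol{\theta}_k^0)$, then apply the triangle inequality $\|\nabla\mathcal{L}\|\leq\|\nabla_{\mathbf{W}}\mathcal{L}\|+\|\nabla_{\boldsymbol{\theta}}\mathcal{L}\|$ and take expectations. Your closing remark about the tension between the expectation-based criterion and a genuine sample-path statement about accumulation points is a fair observation of looseness in the theorem's phrasing, but under the expectation reading (which is what the paper proves) your argument matches theirs.
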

\begin{proof}
    Based on triangle inequality, we have
    \begin{equation*}
        \begin{aligned}
            & \mathbb{E} [ \|\nabla\mathcal{L}(\mathbf{W}_k^0, \boldsymbol{\theta}_k^0) \|] \\
            \leq & \mathbb{E} [\|\nabla_{\mathbf{W}} \mathcal{L}(\mathbf{W}_{k}^{0}, \boldsymbol{\theta}_{k}^0 )\|] + \mathbb{E} [ \|\nabla_{\boldsymbol{\theta}} \mathcal{L}(\mathbf{W}_{k}^{0}, \boldsymbol{\theta}_{k}^{0} )\|]
        \end{aligned}       
    \end{equation*}
    The right hand side converges to 0 as $k \rightarrow \infty$ because of Theorem \ref{theom:gradient_lim_0}. Therefore, $\mathbb{E} [ \|\nabla\mathcal{L}(\mathbf{W}_k^0, \boldsymbol{\theta}_k^0) \|]$ converges to 0 as $k \rightarrow \infty$
\end{proof}

\bibliographyApp{../bib/abbreviations,../bib/articles,../bib/proceedings,../bib/books}
\bibliographystyleApp{plain}

\end{document}